\newtheorem{assume}{Assumption}
\newtheorem{theorem}{Theorem}
\newtheorem{lemma}{Lemma}
\newcommand{\BibTeX}{B\kern-.05em{\sc i\kern-.025em b}\kern-.08em\TeX}
\begin{document}


\begin{frontmatter}


\paperid{7746} 


\title{Enhancing Trust-Region Bayesian Optimization via Newton Methods}


\author[A]{\fnms{Quanlin}~\snm{Chen}\orcid{0009-0002-3789-7627}\footnote{Equal contribution.}}
\author[A]{\fnms{Yiyu}~\snm{Chen}\orcid{0000-0001-7186-7371}\footnotemark}
\author[A]{\fnms{Jing}~\snm{Huo}\orcid{0000-0002-8504-455X}\thanks{Corresponding Author. Email: huojing@nju.edu.cn}} 
\author[B]{\fnms{Tianyu}~\snm{Ding}\orcid{0000-0001-8445-4330}}
\author[A]{\fnms{Yang}~\snm{Gao}\orcid{0000-0002-2488-1813}}
\author[C]{\fnms{Yuetong}~\snm{Chen}\orcid{0009-0004-7044-3402}}

\address[A]{State Key Laboratory for Novel Software Technology, Nanjing University, Nanjing 210023, China}
\address[B]{Applied Sciences Group, Microsoft, Redmond, WA 98034, USA}
\address[C]{Sun Yat-sen University, Guangzhou 510275, China}


\begin{abstract}
    Bayesian Optimization (BO) has been widely applied
to optimize expensive black-box functions while retaining sample efficiency.
However, scaling BO to high-dimensional spaces remains challenging.
Existing literature proposes performing standard BO in multiple local trust regions (TuRBO)
for heterogeneous modeling of the objective function and avoiding over-exploration.
Despite its advantages, using local Gaussian Processes (GPs) reduces sampling efficiency compared to a global GP.
To enhance sampling efficiency while preserving heterogeneous modeling, we propose to construct multiple local quadratic models using gradients and Hessians from a global GP,
and select new sample points by solving the bound-constrained quadratic program.
Additionally, we address the issue of vanishing gradients of GPs in high-dimensional spaces.
We provide a convergence analysis and demonstrate through experimental results that
our method enhances the efficacy of TuRBO and outperforms a wide range of high-dimensional BO techniques
on synthetic functions and real-world applications.
\end{abstract}

\end{frontmatter}

\section{Introduction}
\label{sec:intro}
Bayesian Optimization (BO) has been one of the popular methods for the global optimization of expensive black-box functions
due to its high sampling efficiency.
Applications include hyperparameter tuning for deep learning \cite{DBLP:conf/iclr/HvarfnerSSLHN22},
discovering new molecules for chemical engineering \cite{vaebo}, 
searching an optimal policy for reinforcement learning \cite{gibo}, and so on.
BO is a sequential model-based approach consisting of two main components: a surrogate model and an acquisition function.
The surrogate model, typically implemented as a Gaussian Process regression, is used to improve the sampling efficiency of BO by modeling the objective function.
The acquisition function is used to determine the next sample point.

While BO performs well in optimizing low-dimensional functions,
it struggles with high-dimensional problems for several reasons.
First, the surrogate model loses accuracy in the high-dimensional space when estimating the objective function.
This is because it is impossible to fill the high-dimensional space with finite sample points, 
even with a large sample size \cite{gyorfi2002distribution}.
Second, the computational complexity of optimizing the acquisition function grows exponentially with dimensions \cite{add-bo}.

Various methods have been proposed to address the curses of dimensionality in BO.
The vast majority of the prior work assumes special structures in the objective function,
such as additive structure \cite{add-bo,add-bo-tree} or intrinsic dimension \cite{rembo,alebo}.
However, these assumptions are often too restrictive for widespread application.
Other works directly improve the high-dimensional BO without additional assumptions,
including TuRBO \cite{turbo}, GIBO \cite{gibo}, and MPD \cite{MPD}.

This paper focuses on trust-region BO (TuRBO).
TuRBO is attractive because it uses local GPs for heterogeneous modeling of the objective function
and performs BO locally in multiple trust regions to avoid over-exploration.
However, using local GPs reduces sampling efficiency compared to a global GP.
To overcome this limitation, we propose a new trust-region BO method (Newton-BO) that incorporates Newton methods into BO.
It constructs multiple local quadratic models using gradients and Hessians derived from a global GP,
enabling heterogeneous modeling of the objective function while maintaining the same sample efficiency of a global GP.
In each iteration, our method consists of three main stages:
building multiple local quadratic models using derivatives from a global GP,
selecting new sample points by solving the bound-constrained quadratic program in each trust region,
and adjusting radii of trust regions based on new evaluations.
To ensure global optimization, the algorithm restarts in a new location selected using Predictive Entropy Search.
Additionally, we address the issue of vanishing gradients of GPs in high-dimensional spaces and
provide theoretical proof that our method converges to stationary points with high probability.
In summary, our main contributions are:
\begin{itemize}
    \item Proposing a new trust-region BO method that incorporates Newton methods to enhance sampling efficiency while retaining heterogeneous modeling.
    \item Addressing the issue of vanishing gradients of GPs in high-dimensional spaces.
    \item Providing a convergence analysis guaranteeing the convergence of our proposed method.
    \item Empirically validating our method on synthetic and real-world applications, demonstrating improved efficacy over TuRBO and outperforming various high-dimensional BO methods.
\end{itemize}

\section{Related Work}
\label{sec:related}

In the realm of high-dimensional BO, there are generally three kinds of methods.
The first kind of method assumes the existence of a lower-dimensional structure within objective functions,
typically employing a three-stage process: 
producing a low-dimensional embedding, 
performing standard BO in this low-dimensional space, and
projecting found optimal points back to the original space. 
In REMBO \cite{rembo}, the low-dimensional embedding is achieved by using a random projection matrix.
But REMBO often produces points that fall outside the box bounds of the original space,
necessitating their projection onto the facet of the box and resulting in a harmful distortion.
Subsequently, several techniques are proposed to fix this problem \cite{alebo,rembo2020}.
In addition, the random low-dimensional embedding can be also achieved by randomized hashing functions \cite{hesbo,baxus}.
The key advantage of the hashing functions lies in their ability to effortlessly map candidate points back to the original space,
thus circumventing the need for clipping to box-bound facets.
Some works achieve linear embeddings based on learning.
For example, SIR-BO employs Sliced Inverse Regression to derive the linear embeddings, while
SI-BO \cite{si-bo} learns the linear embeddings via low-rank matrix recovery.
\cite{GarnettOH14} learn the linear embeddings by maximizing the marginal likelihood of GPs.
Besides, nonlinear embedding techniques have also been explored, particularly those based on Variational Autoencoders 
\cite{vaebo}.
However, these approaches typically require a substantially larger sample size.
In addition to embedding techniques, some research has focused on variable selection methods \cite{linebo,dropout,vs-bo,mcts-bo}.

The second kind of method assumes the existence of an additive structure for the objective function.
The additive objective function can be modeled by additive GPs \cite{add-bo},
allowing for more efficient maximization of the acquisition function.
However, the true additive structure still remains challenging to learn.
Several works propose to learn the underlying additive structure from training data.
For example, \cite{add-bo-skl} proposed a method that employs the Dirichlet process to assign input variables into distinct groups.
\cite{add-bo-graph} employ a dependency graph to model the interactions between input variables, 
allowing for the assignment of input variables into overlapping groups.
\cite{add-bo-tree} proposed a refinement that restricts the dependency graph to a tree structure,
reducing the computational complexity of maximizing acquisition functions.
In contrast to data-driven decomposition methods, RDUCB \cite{rducb} learns a random tree-based decomposition
to mitigate the potential mismatch between the objective function and additive GPs.

The third kind of method focuses on direct enhancements to the BO process in high-dimensional spaces,
without relying on any other assumption.
For example, TuRBO \cite{turbo}, GIBO \cite{gibo} and MPD \cite{MPD} adopt local strategies for BO to avoid over-exploration in high-dimensional spaces.
Another set of approaches focuses on partitioning the search space and identifying a promising region to perform BO more efficiently
\cite{lamtcs}.
Researchers have also proposed better initialization methods for optimizing high-dimensional acquisition functions efficiently \cite{elasticGP,aibo}.

GIBO and MPD are similar to ours, which also utilize gradients of GPs.
In contrast to their work, our work incorporates both gradient and Hessian information from GPs
and considers addressing the issue of vanishing gradients of GPs in high-dimensional spaces.

\section{Background}
\label{sec:background}
\subsection{Bayesian Optimization}
Bayesian optimization considers an optimization problem
$\min_{\mathbf{x}\in\mathcal{X}}f(\mathbf{x})$ where $f$ is a black-box and derivative-free function
over a hyper-rectangular feasible set $\mathcal{X}$.
As a sequential model-based approach, BO comprises two main components: 
a surrogate model and an acquisition function.
The surrogate model approximates the objective function, while 
the acquisition function, based on this model, determines the next sampling point.
Gaussian Process regression is typically employed as the surrogate model \cite{gp},
$f \sim \mathcal{GP}(m(\cdot), k(\cdot, \cdot))$ with a mean function
$m(\cdot)$ and a kernel $k(\cdot, \cdot)$. 
More specifically,
GP assumes that evaluations of any finite number sampling point $\mathbf{x}_{1:n}$ follow a joint Gaussian distribution,
$\mathbf{f}\sim \mathcal{N}(\mathbf{m}(\mathbf{x}_{1:n}),\mathbf{K}(\mathbf{x}_{1:n},\mathbf{x}_{1:n}))$.
For simplicity, $\mathbf{m}(\mathbf{x}_{1:n})$ and $\mathbf{K}(\mathbf{x}_{1:n},\mathbf{x}_{1:n})$ are abbreviated as $\mathbf{m}$ and $\mathbf{K}$, respectively.
Given training data $\mathcal{D}_n=\{\mathbf{x}_{1:n},\mathbf{y}_{1:n}\}$
and a new point $\mathbf{x}_*$, the joint distribution is given by
$$ \begin{bmatrix}
    \mathbf{y}_{1:n} \\ f(\mathbf{x}_*)
\end{bmatrix} \sim \mathcal{N}\left(\begin{bmatrix}
    \mathbf{m} \\ m(\mathbf{x}_*)
\end{bmatrix}, \begin{bmatrix}
    \mathbf{K}+\sigma_n^2\mathbf{I} & 
    \mathbf{k}(\mathbf{x}_{1:n},\mathbf{x}_*) \\ 
    \mathbf{k}(\mathbf{x}_*,\mathbf{x}_{1:n}) & 
    k(\mathbf{x}_*,\mathbf{x}_*)
\end{bmatrix}\right)$$
where $\sigma_n^2$ is the variance of Gaussian noise added to the observations. 
It follows from the Sherman-Morrison-Woodbury formula that
the posterior normal distribution for $f(\mathbf{x}_*)$
is given by $f(\mathbf{x}_*) | \mathcal{D}_n,\mathbf{x}_* \sim \mathcal{N}(\mu_n(\mathbf{x}_*),\sigma_n^2(\mathbf{x}_*))$ 
where
\begin{align*}
    \mu_n(\mathbf{x}_*) ={}& m(\mathbf{x}_*) + \mathbf{k}(\mathbf{x}_*,\mathbf{x}_{1:n})
    (\mathbf{K}+\sigma_n^2\mathbf{I})^{-1} (\mathbf{y}_{1:n}-\mathbf{m}) \\
    \sigma_n^2(\mathbf{x}_*) ={}& k(\mathbf{x}_*,\mathbf{x}_*) - \mathbf{k}(\mathbf{x}_*,\mathbf{x}_{1:n}) 
    (\mathbf{K}+\sigma_n^2\mathbf{I})^{-1} \mathbf{k}(\mathbf{x}_{1:n},\mathbf{x}_*)
\end{align*}

Based on this posterior, an acquisition function $\alpha(\cdot)$ is constructed to quantify the utility of sampling points.
Common choices include Expected Improvement (EI) \cite{ego}, Upper Confidence Bound (UCB) \cite{gp-ucb}, and Predictive Entropy Search (PES) \cite{PES}.
For example, PES evaluates the mutual information between the global minimizer $\mathbf{x}_*$ and predicted value $y$ given $\mathbf{x}$, i.e.,
\begin{align*}
    \alpha_n(\mathbf{x}) :={}& I(\mathbf{x}_*;y | \mathcal{D}_n,\mathbf{x}) \\
    ={}& H[p(y|\mathcal{D}_n,\mathbf{x})] - \mathop{\mathbb{E}}_{p(\mathbf{x}_*|\mathcal{D}_n)}[H[p(y|\mathcal{D}_n, \mathbf{x}, \mathbf{x}_*)]].
\end{align*}
The next sample point is determined by maximizing the acquisition function, $\mathbf{x}_{n+1} = \mathop{\arg\max}_{\mathbf{x}\in\mathcal{X}}\alpha_n(\mathbf{x})$.
After evaluating the objective function at $\mathbf{x}_{n+1}$, the process advances to the next iteration.

\subsection{Uniform Error Bounds of the GP}
Under the mild assumption of Lipschitz continuity for the kernel function, a directly computable probabilistic uniform error bound can be established.
\begin{assume}\label{a:gp-bound}
    The unknown objective function $f$ is a sample from a Gaussian process $\mathcal{GP}(0, k(\mathbf{x}, \mathbf{x}'))$
and observations are perturbed by Gaussian noise, $y = f(\mathbf{x}) + \epsilon$, where $\epsilon\sim\mathcal{N}(0,\sigma^2)$.
The kernel $k$ is Lipschitz continuous with the Lipschitz constant $L_k$ and possesses continuous partial derivatives up to the fourth order.
\end{assume}

According to Theorem 3.2 in \cite{gp-bound}, an unknown function $f$ satisfying Assumption \ref{a:gp-bound} is almost surely continuous on $\mathcal{X}$ and is Lipschitz continuous with high probability.
Furthermore, if $f$ has a Lipschitz constant $L_f$, then
the posterior mean function $\mu_t$ of the GP fitted on the training data $\mathcal{D}_t$ is continuous with a Lipschitz constant $L_{\mu_t}$,
and the standard deviation $\sigma_t$ admits a modulus of continuity $\omega_{\sigma_t}$ on $\mathcal{X}$, where
\begin{align*}
    L_{\mu_t} \leq{}& L_k \sqrt{t} \|(\mathbf{K}+\sigma_t^2\mathbf{I})^{-1}\mathbf{y}\|_2\\
    \omega_{\sigma_t}(\tau) \leq{}& \sqrt{2\tau L_k\left(1+t\|(\mathbf{K}+\sigma_t^2\mathbf{I})^{-1}\|_2
    \max_{\mathbf{x},\mathbf{x}'\in\mathcal{X}}k(\mathbf{x},\mathbf{x}')\right)}.
\end{align*}
Moreover, given $\delta \in (0, 1),~ \tau>0$, one has that
\begin{equation}\label{eq:gp-bound}
\mathbb{P}\left( |f(\mathbf{x}) - \mu_t(\mathbf{x})| 
\leq \sqrt{\beta(\tau)}\sigma_t(\mathbf{x}) + \gamma(\tau),~\forall \mathbf{x}\in\mathcal{X} \right) \geq 1-\delta,
\end{equation}
where
\begin{equation}\label{eq:gp-Lipschitz}
\begin{aligned}
    \beta(\tau) ={}& 2\log\left(\frac{M(\tau, \mathcal{X})}{\delta}\right),\\
    \gamma(\tau) ={}&  (L_{\mu_t} + L_f)\tau + \sqrt{\beta(\tau)}\omega_{\sigma_t}(\tau),
\end{aligned}
\end{equation}
and $M(\tau, \mathcal{X})$ is the covering number that is the minimum number of spherical balls
with radius $\tau$ required to completely cover $\mathcal{X}$.

\subsection{Trust-Region Bayesian Optimization (TuRBO)}
TuRBO abandons global surrogate modeling in favor of maintaining multiple trust regions simultaneously, enabling heterogeneous modeling and avoiding over-exploration.
Each trust region $\text{TR}^{(\ell)}$ is a hyperrectangle that employs an independent local GP model, $f^{(\ell)}\sim\mathcal{GP}^{(\ell)}(\mu^{(\ell)}(\mathbf{x}), k^{(\ell)}(\mathbf{x})).$
Then a batch of $q$ candidates is drawn from the union of all trust regions using Thompson sampling \cite{TS}, i.e.,
\begin{align*}
    \mathbf{x}_i ={}& \mathop{\arg\min}_\ell \mathop{\arg\min}_{\mathbf{x}\in \text{TR}^{(\ell)}} f_i^{(\ell)}\\
    &\text{ where } f_i^{(\ell)} \text{ is a sample from } \mathcal{GP}^{(\ell)}(\mu^{(\ell)}, k^{(\ell)})
\end{align*}
for $i=1,\dots,q$.
The trust-region radius of each TR will be updated adaptively based on new evaluations.
It is decreased if the optimizer appears stuck and increased if the optimizer finds better solutions.

\begin{algorithm2e}[!tb]
    \KwIn{Initial size $n$; The number of iterations $T$; The batch size $q$}
    \KwOut{The sample points and their evaluations $\mathcal{D}_T$}
    $\mathcal{D}_0=\{\mathbf{x}_{1:n}, \mathbf{y}_{1:n}\} \leftarrow$ Randomly sample $n$ points from the feasible set $\mathcal{X}$ and then evaluate these points\;
    \textbf{Initializations.} Choose an initial radius for each trust region, $\{\Delta_0^{(\ell)}\}_{\ell=1}^q$,
    and determine an initial point for each trust region, $\{\mathbf{x}^{(\ell)}\}_{\ell=1}^q\subset\mathcal{D}_0$\;
    \For{$k\leftarrow 1$ \KwTo $T$}{
        Build a global D-scaled GP based on the training data $\mathcal{D}_k$\;
        \For{$\ell\leftarrow 1$ \KwTo $q$}{
        Build a local quadratic model in the $\ell$-th trust region\;
        $\mathbf{x}_*^{(\ell)} \leftarrow$ Select a candidate by minimizing the model within the $\ell$-th trust region according to Eq.\ref{eq:quadratic-program}\;
        Evaluate the candidate, $y^{(\ell)}\leftarrow f(\mathbf{x}_*^{(\ell)})$\;
        Update the trust-region radius $\Delta_k^{(\ell)}$ based on new evaluations\;
        }
        Update the training data, $\mathcal{D}_{k+1}\leftarrow\mathcal{D}_k\cup \{\mathbf{x}_*^{(\ell)}, y^{(\ell)}\}_{\ell=1}^q$\;
    }
    \Return{$\mathcal{D}_T$}
\caption{Newton-BO}
\label{algo:turboD}
\end{algorithm2e}

\begin{figure}[t]
    \centering
    \includegraphics[width=0.49\linewidth]{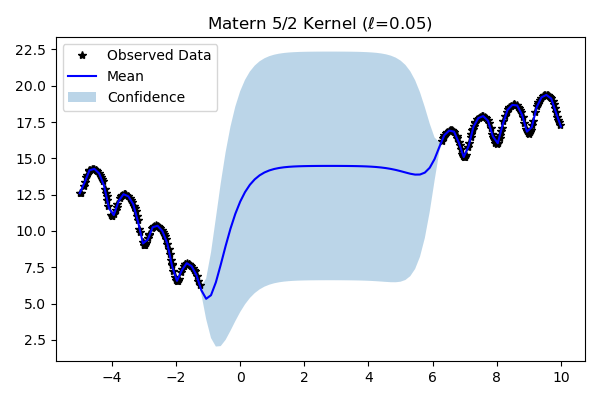}
    \includegraphics[width=0.49\linewidth]{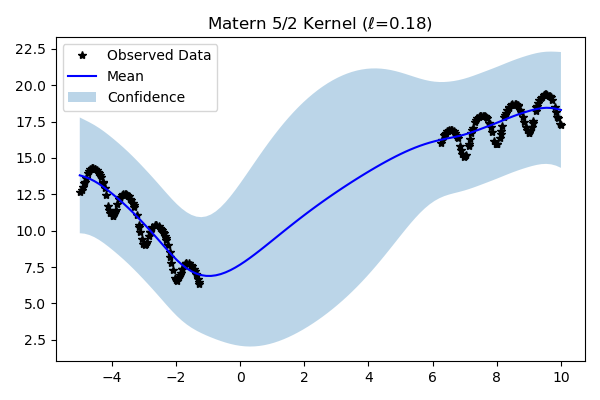}
\caption{
    Two GP models with varying length-scales attempting to model the Ackley function.
}
    \label{fig:matern}
\end{figure}

\begin{figure*}[t]
    \centering
    \includegraphics[width=\linewidth]{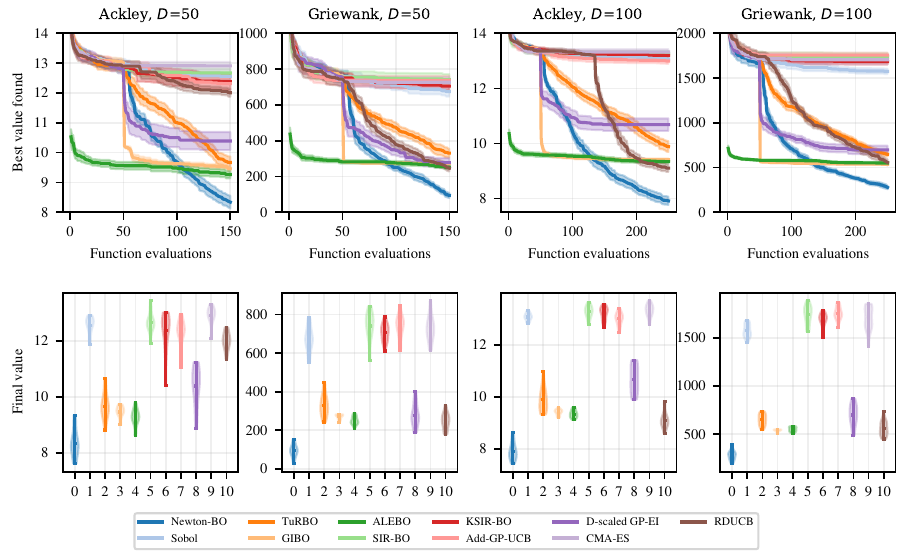}
\caption{We compare Newton-BO to baseline methods on 50-dimensional functions and 100-dimensional functions,
showing (Top row) optimal values by each iteration averaged over 20 repeated runs,
and (Bottom row) the distribution over the final optimal values over 20 repeated runs.
}
    \label{fig:full}
\end{figure*}

\begin{figure*}[t]
    \centering
    \includegraphics[width=\linewidth]{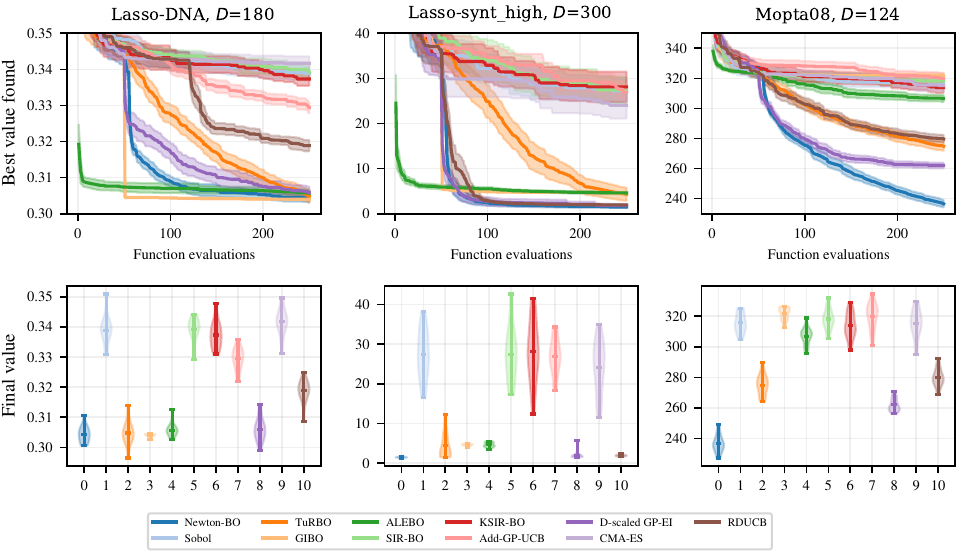}
\caption{
    We compare Newton-BO to baseline methods on the Lasso-DNA tuning ($D=180$),
Lasso-synt\_high tuning ($D=300$) and MOPTA vehicle design ($D=124$),
showing (Top row) optimal values by each iteration averaged over 20 repeated runs,
and (Bottom row) the distribution over the final optimal values over 20 repeated runs.
}
    \label{fig:real}
\end{figure*}

\begin{figure*}[t]
    \centering
    \includegraphics[width=\linewidth]{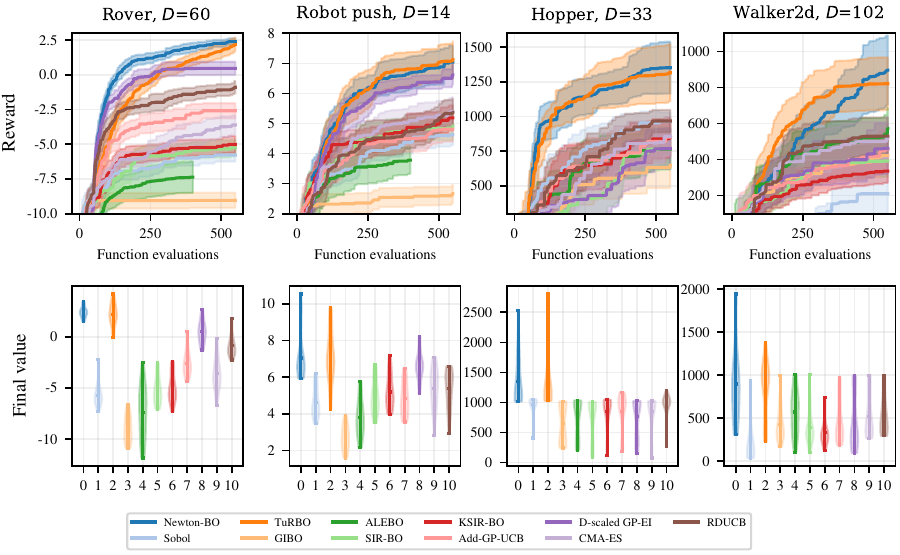}
\caption{
    We compare Newton-BO to baseline methods on the Rover trajectory planning, Robot pushing, Hopper, and Walker2d problems,
showing (Top row) optimal values by each iteration averaged over 20 repeated runs,
and (Bottom row) the distribution over the final optimal values over 20 repeated runs.
}
    \label{fig:motion}
\end{figure*}

\begin{figure*}[t]
    \centering
    \includegraphics[width=\linewidth]{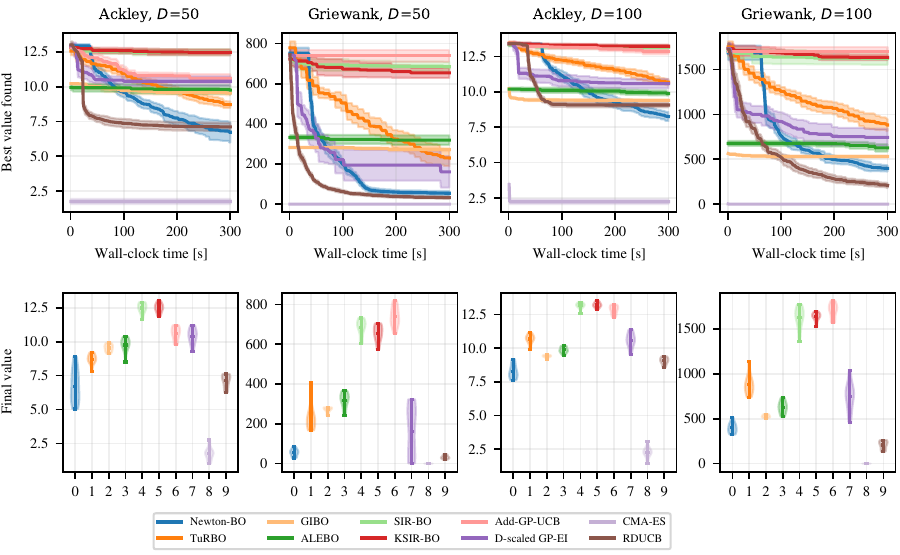}
\caption{
    Comparison between Newton-BO and baseline methods on 50-dimensional and 100-dimensional synthetic functions as a function of the wall-clock time [s]. 
}
    \label{fig:time}
\end{figure*}

\section{Method}
\label{sec:method}
In this section, we  introduce a novel trust-region BO method for optimizing high-dimensional black-box functions.
As discussed previously, TuRBO maintains multiple independent local GPs, resulting in reduced sampling efficiency compared to a global GP.
To enhance sampling efficiency, we construct multiple  local quadratic models using gradients and Hessians from a global GP.
This approach allows for heterogeneous modeling of the objective function while maintaining the same sample efficiency of a global GP.
Furthermore, to achieve global optimization, the algorithm restarts in a new location selected by Predictive Entropy Search.
Additionally, we address the issue of vanishing gradients of GPs in high-dimensional spaces.

\paragraph{Local modeling.}
Instead of just evaluating a single point in each iteration, we consider a batch-sequential extension which enables parallel evaluation of multiple points.
Specifically, given $q$ points $\{\mathbf{x}^{(\ell)}\}_{\ell=1}^q$,
$q$ local quadratic models are constructed around these points, defined as
\begin{equation*}
    m^{(\ell)}(\mathbf{x}^{(\ell)}+\mathbf{s}) = f(\mathbf{x}^{(\ell)}) + \mathbf{s}^\top\mathbf{g}^{(\ell)} + \frac12\mathbf{s}^\top\mathbf{B}^{(\ell)}\mathbf{s},\ \ell=1,\dots,q
\end{equation*}
where $\mathbf{g}$ and $\mathbf{B}$ approximate the gradient and Hessian of the objective function, respectively.
Since the derivatives of the objective function $f$ are unknown,  they are estimated using derivatives of GPs with the reparameterization trick, i.e.,
\begin{align*}
\mathbf{g}^{(\ell)} ={}& \nabla \mu(\mathbf{x}^{(\ell)}) + \lambda\nabla \sigma(\mathbf{x}^{(\ell)}),\\
\mathbf{B}^{(\ell)} ={}& \nabla^2\mu(\mathbf{x}^{(\ell)})+\lambda\nabla^2\sigma(\mathbf{x}^{(\ell)}),
\end{align*} 
where $\mu(\cdot)$ and $\sigma(\cdot)$ are the posterior mean and standard deviation of the GP model, respectively.
Instead of sampled from a standard normal distribution,  the variable
$\lambda$ is sampled from a truncated standard normal distribution with truncation range $(-1,1)$ to ensure local convergence (see Appendix \ref{sec:proof} for details).
It is important to note that all derivatives are derived from a single global GP, resulting in higher sample efficiency compared with local GPs.

\paragraph{Trust regions.}
To ensure the quadratic model accurately approximates $f$,
$\mathbf{x}^{(\ell)}+\mathbf{s}$ needs to be restricted to a trust region $\mathcal{B}^{(\ell)}$ defined as
$$ \mathcal{B}^{(\ell)}:=\{\mathbf{x}\in\mathbb{R}^D\mid \|\mathbf{x}-\mathbf{x}^{(\ell)}\|\leq\Delta^{(\ell)}\},$$
where $\Delta^{(\ell)}$ is the trust-region radius, adjusted iteratively.
In BO, the search space is typically a rectangular box.
Without loss of generality, we assume that the box is $[0,1]^D$.
Given this constraint, the trust region becomes
$$  \mathcal{B}^{(\ell)}:=\{\mathbf{x}\mid \|\mathbf{x}-\mathbf{x}^{(\ell)}\|\leq\Delta^{(\ell)},~\mathbf{0}\leq \mathbf{x} \leq \mathbf{1}\}. $$
To simplify this, we adopt the $\infty$-norm for the trust region, which transforms $\mathcal{B}^{(\ell)}$ into a simple rectangular,
$$ \mathcal{B}^{(\ell)}:=\{\mathbf{x}\mid -\Delta^{(\ell)}\mathbf{1}\leq\mathbf{x}-\mathbf{x}^{(\ell)}\leq\Delta^{(\ell)}\mathbf{1},~\mathbf{0}\leq \mathbf{x} \leq \mathbf{1}\}. $$
Candidates are selected by solving the bound-constrained quadratic program,
\begin{equation} \label{eq:quadratic-program}
    \underset{\mathbf{s}}{\operatorname{minimize}} \quad  m(\mathbf{x}^{(\ell)}+\mathbf{s}),\quad \operatorname{subject\ to}~ \mathbf{x}^{(\ell)}+\mathbf{s} \in \mathcal{B}^{(\ell)}.
\end{equation}
This is another difference between our method and TuRBO which obtains candidates through Thompson sampling.
The above problem is typically solved using gradient projection methods.
However, the Hessian of the GP is often nearly singular, posing challenges for gradient projection techniques.
Such methods may require numerous iterations and yield only marginal improvements in each step.
Instead, we utilize the Covariance Matrix Adaptation Evolution Strategy (CMA-ES) \cite{cma-es} to solve this bound-constrained quadratic program.

\paragraph{Restarting strategy}
In this paper, we adopt a radius update strategy similar to that used in TuRBO, which has demonstrated effectiveness in balancing local exploitation with global exploration.

Additionally, our method restarts in two specific scenarios: (i) when the radius falls below a predetermined minimum threshold $\Delta_{\operatorname{min}}$; and
(ii) when the norm of $\mathbf{g}$ is less than $10^{-5}$.
To achieve global optimization, the algorithm restarts at new locations selected by an acquisition function.
We employ parallel PES \cite{PPES} as the acquisition function, as it is one of the most popular non-greedy strategies in batch BO.

\paragraph{Challenges in the high-dimensional space.}
\cite{elasticGP} claims that in high dimensions the acquisition function becomes very flat across large regions, causing the gradient of the acquisition function to vanish.
Since the GP surface can be viewed  as a special case of the UCB acquisition function, our method also encounters  the challenge.

Before discussing potential solutions, we first illustrate this issue empirically.
To simulate the sparsity of high-dimensional data, we generate uniform samples only in the intervals $[-5,0]$ and $[5,10]$ of the Ackley function, building GPs with varying  length-scales.
As shown in Figure \ref{fig:matern}, a GP with a small length-scale is sharp in areas with abundant data but flat in areas with sparse data.
Conversely, a GP with a large length-scale exhibits a significant gradient even in regions with limited data.

Lemma 1 in \cite{elasticGP} indicates  that the gradient of the acquisition function becomes significant when the length-scale is sufficiently large, which is also applicable to the surface of GPs.
Instead of manually adjusting length-scales, we employ  D-scaled GPs \cite{D-scaled-GP} which utilize a LogNormal prior for the length-scales,
$$\ell_i\sim\mathcal{LN}\left(-4+\frac{\log D}{2},1\right).$$
This prior ensures that the length-scales do not become too small. 
We denote our method as Newton-BO, as presented in Algorithm \ref{algo:turboD}.

\section{A convergence analysis}
\label{sec:analysis}
Our method shares several key features with trust-region derivative-free optimization methods,
including the use of quadratic models to approximate the objective function and adaptive trust region updates.
However, a crucial distinction lies in the nature of the error between the quadratic model and the objective function.
This error is probabilistic in our approach, while 
it is typically deterministic in derivative-free optimization methods using interpolation techniques.
This probabilistic aspect necessitates a verification of the coherence between
the derivatives of GPs and those of the objective function.
This fundamental difference precludes the direct application of standard convergence theory for derivative-free methods to our method.
Consequently, we must reconsider the convergence analysis in detail.

To maintain analytical simplicity,
we adopt the same assumptions as \cite{conn1997convergence}
and follow their trust region update strategy, as outlined in Appendix \ref{sec:proof}.
\begin{assume}\label{a:f-gh-bound}
    The objective function $f:\mathbb{R}^D\to\mathbb{R}$ is twice continuously differentiable whose gradient $\nabla f(\mathbf{x})$ and Hessian
    $\nabla^2 f(\mathbf{x})$ is uniformly bounded in the norm. In other words, there are constants $\kappa_{fg}>0$ and $\kappa_{fh}>0$
    such that 
    $$\|\nabla f(\mathbf{x})\|_2 \leq \kappa_{fg},\quad \|\nabla^2 f(\mathbf{x})\|_2 < \kappa_{fh}$$ 
    for all $\mathbf{x}\in\mathbb{R}^D$.
\end{assume}
\begin{assume}\label{a:f-bounded-below}
    The objective function is bounded below on $\mathbb{R}^D$.
\end{assume}
\begin{assume}\label{a:mh-bound}
    The approximate Hessians $\mathbf{B}_k$ are uniformly bounded in the norm. In other words, there is a constant $\kappa_{mh}>0$ such that
    $\|\mathbf{B}_k\|_2 \leq \kappa_{mh},~\forall \mathbf{x}\in\mathcal{B}_k$.
\end{assume}

The following theorem shows that our approach will converge to stationary points with high probability.
The proof is inspired by that of \cite{conn1997convergence} and provided in Appendix \ref{sec:proof}.

\begin{theorem}\label{th:converge}
    Assume that Assumption \ref{a:gp-bound}-\ref{a:mh-bound} hold. Then given $\delta\in(0,1)$, there is a sequence of iterations $\{\mathbf{x}_k\}$ such that
    $\forall \epsilon\in(0,1)~\exists N$,
\begin{equation*}
    \mathbb{P}\left( \inf_{k>N}\|\nabla f(\mathbf{x}_k)\| = 0 \right) \geq 1-\delta.
\end{equation*}
\end{theorem}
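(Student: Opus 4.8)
The plan is to transplant the deterministic trust-region convergence argument of \cite{conn1997convergence} onto a single high-probability event on which the GP-derived quadratic model is certifiably accurate. Concretely, I would fix $\delta\in(0,1)$ and introduce an event $\mathcal{E}$ on which the uniform error bounds of the GP and their derivative analogues all hold, show $\mathbb{P}(\mathcal{E})\geq 1-\delta$, and then argue that \emph{on} $\mathcal{E}$ the sequence $\{\mathbf{x}_k\}$ behaves exactly like the iterates of a classical trust-region method with sufficiently accurate models, so that $\liminf_k\|\nabla f(\mathbf{x}_k)\|=0$ holds deterministically. Since $\liminf_k\|\nabla f(\mathbf{x}_k)\|=0$ forces $\inf_{k>N}\|\nabla f(\mathbf{x}_k)\|=0$ for every $N$, and the only randomness entering the machinery is confined to $\mathcal{E}$, this immediately yields the claimed probability bound.

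First I would lift the function-value bound \eqref{eq:gp-bound} to the first and second derivatives. Because the kernel possesses continuous partial derivatives up to the fourth order (Assumption \ref{a:gp-bound}), the joint process $(f,\nabla f,\nabla^2 f)$ is again Gaussian with posterior means $(\mu,\nabla\mu,\nabla^2\mu)$, so the covering-number argument behind \eqref{eq:gp-bound} applies to the derivative processes as well. This gives, with probability at least $1-\delta$ and uniformly in $\mathbf{x}$, estimates of the form $\|\nabla f(\mathbf{x})-\nabla\mu(\mathbf{x})\|\leq \sqrt{\beta}\,\|\nabla\sigma(\mathbf{x})\|+\gamma'$ together with an analogous bound for $\nabla^2 f-\nabla^2\mu$. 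Combining these with $|\lambda|<1$ (the truncation of $\lambda$ to $(-1,1)$) and the definitions $\mathbf{g}_k=\nabla\mu+\lambda\nabla\sigma$, $\mathbf{B}_k=\nabla^2\mu+\lambda\nabla^2\sigma$, I would control $\|\mathbf{g}_k-\nabla f(\mathbf{x}_k)\|$ and $\|\mathbf{B}_k-\nabla^2 f(\mathbf{x}_k)\|$ by quantities that vanish as the trust region is sampled and $\sigma\to 0$. A Taylor expansion of $f$ together with these derivative errors then yields the \emph{fully-quadratic} model-agreement estimate $|f(\mathbf{x}_k+\mathbf{s})-m_k(\mathbf{x}_k+\mathbf{s})|\leq\kappa\,\Delta_k^2$ for $\mathbf{x}_k+\mathbf{s}\in\mathcal{B}_k$, where $\kappa$ depends on $\kappa_{fh}$, $\kappa_{mh}$ (Assumptions \ref{a:f-gh-bound}, \ref{a:mh-bound}) and the GP error constants.

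With the model certified, I would run the standard contradiction argument. The CMA-ES solution of the bounded quadratic program \eqref{eq:quadratic-program} is assumed to achieve at least Cauchy decrease, $m_k(\mathbf{x}_k)-m_k(\mathbf{x}_k+\mathbf{s}_k)\geq \tfrac12\|\mathbf{g}_k\|\min\!\bigl(\|\mathbf{g}_k\|/\kappa_{mh},\Delta_k\bigr)$. Suppose, for contradiction, that $\|\nabla f(\mathbf{x}_k)\|\geq\epsilon$ for all $k>N$; the derivative coherence then keeps $\|\mathbf{g}_k\|$ bounded below once $\sigma$ is small. Comparing the $O(\Delta_k^2)$ model error against the $O(\Delta_k)$ predicted decrease shows the ratio test $\rho_k$ exceeds the acceptance threshold whenever $\Delta_k$ is small, so the radius stays bounded away from zero; each resulting successful step then decreases $f$ by a fixed positive amount, and infinitely many such steps violate the lower bound of Assumption \ref{a:f-bounded-below}. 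This contradiction gives $\liminf_k\|\nabla f(\mathbf{x}_k)\|=0$ on $\mathcal{E}$, completing the proof.

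The hard part will be the probabilistic derivative coherence of Step~1. Extending the covering-number bound from values to gradients and Hessians, and guaranteeing that a single event of probability at least $1-\delta$ suffices for \emph{all} (infinitely many) iterates simultaneously, requires careful union-bound and covering-radius bookkeeping. The truncation $\lambda\in(-1,1)$ is what makes this tractable: it keeps the exploration terms $\lambda\nabla\sigma$ and $\lambda\nabla^2\sigma$ uniformly dominated by $\|\nabla\sigma\|$ and $\|\nabla^2\sigma\|$, so that as the trust region fills with data and the posterior standard deviation collapses, the model error genuinely scales like $\Delta_k^2$ — precisely the bridge the deterministic trust-region theory requires.
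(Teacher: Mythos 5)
Your overall strategy---condition on the uniform GP error event of probability $1-\delta$, certify the quadratic model on that event, and then run the deterministic trust-region machinery of \cite{conn1997convergence}---is exactly the paper's strategy, and your model-error step (gradient coherence $\|\nabla f - \nabla\mu\| \lesssim \sqrt{\beta}\,\|\nabla\sigma\|$ plus $|\lambda|<1$, then Taylor expansion) matches the paper's Theorem~\ref{th:m-bound}, up to the fact that the paper derives the gradient bound from the function-value bound \eqref{eq:gp-bound} by a finite-difference limit rather than by a fresh covering-number argument on the derivative processes, and the paper never needs Hessian coherence at all: the second-order mismatch is absorbed crudely into $\kappa_{mh}/2+\kappa_{fh}$, giving the weaker but sufficient bound $\kappa_{em}\max\{\Delta_k,\Delta_k^2\}$. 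Your plan to extend the covering-number bookkeeping to $\nabla^2 f$ is therefore unnecessary work.

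The genuine gap is in your contradiction step. You run the contradiction on $\|\nabla f(\mathbf{x}_k)\|\geq\epsilon$ and convert this into a lower bound on the model gradient $\|\mathbf{g}_k\|$ by asserting that the coherence error vanishes ``as the trust region is sampled and $\sigma\to 0$.'' Nothing in the algorithm or the assumptions guarantees posterior contraction: the iterates need not sample densely around $\mathbf{x}_k$, so $\sigma$ (and hence $\|\mathbf{g}_k-\nabla f(\mathbf{x}_k)\|$) need not become small, and your lower bound on $\|\mathbf{g}_k\|$ does not follow. The paper circumvents this with a device you omitted: the trust-region update (Algorithm~\ref{algo:TR}) caps the radius at $\Delta_{k+1}\leq\mu\|\mathbf{g}_k\|_2$. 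This lets the paper argue in the opposite direction. First, Theorem~\ref{th:g0} proves $\liminf_k\|\mathbf{g}_k\|=0$ for the \emph{model} gradients, by combining the radius lower bound (Lemma~\ref{le:radius-lb}) with the sufficient-decrease count of \cite{conn1997convergence} and the boundedness of $f$. Second, Lemma~\ref{le:f0} transfers this to the true gradient: since $\Delta_{k_i}\leq\mu\|\mathbf{g}_{k_i}\|$, the coherence bound $\|\nabla f(\mathbf{x}_{k_i})-\mathbf{g}_{k_i}\|\leq\kappa_{eg}\sqrt{\beta(\tau)}\,\Delta_{k_i}$ yields $\|\nabla f(\mathbf{x}_{k_i})\|\leq(1+\kappa_{eg}\sqrt{\beta(\tau)}\,\mu)\|\mathbf{g}_{k_i}\|\to 0$, with no appeal to $\sigma\to 0$. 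Without this radius-gradient coupling (or some substitute such as a criticality step), your final contradiction cannot be closed as written: when $\Delta_k$ is large, $\|\mathbf{g}_k\|$ may be arbitrarily small even though $\|\nabla f(\mathbf{x}_k)\|\geq\epsilon$, so successful iterations need not produce a fixed decrease in $f$.
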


\section{Experimental Results}
\label{sec:experiments}

In this section, we evaluate our method (Newton-BO) on a wide range of benchmarks:
synthetic functions, Lasso tuning problems, a vehicle design problem, a robot pushing problem, a rover trajectory planning problem, and MuJoCo locomotion tasks.

We compare our method (Newton-BO) to a broad set of baseline methods:
linear embedding (ALEBO \cite{alebo}, SIR-BO), 
nonlinear embedding (KSIR-BO \cite{sir-bo}),
additive models (Add-GP-UCB \cite{add-bo}, RDUCB),
local-search methods (TuRBO, GIBO),
D-scaled GP-EI \cite{D-scaled-GP},
quasirandom search (Sobol),
and an evolutionary strategy (CMA-ES).
For BO using embedding, we take $d=10$ for these experiments. 
For Add-GP-UCB, we take $d=4$ for each group.
Newton-BO and TuRBO maintain 5 trust regions simultaneously.
We test all methods using 50 initial points and batch size of $q=5$.
More details about the settings can be found in Appendix \ref{sec:settings}.
Code to reproduce the results of this paper is available at \url{https://github.com/qlchen2117/NewtonBO}.

\subsection{Synthetic Experiments}
First, we consider the 50-dimensional Ackley function in the domain $[-5,10]^{50}$,
and the 50-dimensional Griewank function in the domain $[-300,600]^{50}$.
Both functions feature numerous local minima and a global minimum,
making them suitable for testing global optimization methods.
Figure \ref{fig:full} shows that Newton-BO enhances the efficacy
of TuRBO and gets the best performance of all methods on the mid-dimensional synthetic functions.
The initialization strategy of ALEBO favors sampling points away from the boundary, resulting in high-quality initial samples.
However, the optimizer of ALEBO tends to stagnate when objective functions lack lower-dimensional structure.
SIR-BO and KSIR-BO demonstrate poor performance in this problem, yielding results comparable to random search.
Add-GP-UCB also underperforms on this problem because objective functions lack additive structure.

Second, we consider the 100-dimensional Ackley function in the domain $[-5,10]^{100}$,
and the 100-dimensional Griewank function in the domain $[-300,600]^{100}$.
Figure \ref{fig:full} shows that Newton-BO again enhances the efficacy of TuRBO and gets the best performance among all methods on the high-dimensional synthetic functions.
GIBO always samples the midpoint of the domain after initialization.
It suffers from the vanishing gradients of GPs in the high-dimensional spaces, causing it to become stuck at the midpoint.
RDUCB performs well on both Griewank functions and 100-dimensional Ackley functions, despite they lacking the additive structure.

\subsection{Real-World Problems}
\paragraph{Weighted Lasso Tuning.} 
We consider the problem of tuning the Lasso (Least Absolute Shrinkage and Selection Operator) regression models.
LassoBench \cite{lassobench} provides a set of benchmark problems for tuning penalty terms for Lasso models.
In Lasso, each regression coefficient corresponds to a penalty term, so the number of hyperparameters equals the number of features in the dataset. 
We focus on two Lasso tuning problems: a 180-dimensional DNA dataset with 43 effective dimensions,
and a 300-dimensional synthetic dataset with 15 effective dimensions.

Figure \ref{fig:real} shows that
Newton-BO performs comparably to D-scaled GP-EI while outperforming other methods on the Lasso-synt\_high problem.
For the Lasso-DNA problem, Newton-BO eventually attains optimal values comparable to GIBO while outperforming other methods.
GIBO, after initially sampling the midpoint, stagnates due to vanishing gradients of GPs in high-dimensional spaces.
Its performance is primarily attributed to this initial midpoint sampling.
ALEBO also becomes stuck after initialization, despite the existence of lower-dimensional structure in these problems.
RDUCB performs well on the Lasso-synt\_high but but struggles with Lasso-DNA.

\paragraph{Vehicle Design.}
We consider the vehicle design problem with a soft penalty as defined in \cite{saasbo}.
The objective is to minimize the mass of a vehicle characterized by 124 design variables describing materials, gauges, and vehicle shape.
This results in a 124-dimensional optimization problem.

Figure \ref{fig:real} shows that Newton-BO enhances the efficacy of TuRBO and achieves the best performance among all methods on the MOPTA08 problem.
In this problem, the midpoint is not close to the optimal point, resulting in initial strategies of GIBO and ALEBO performing comparably to random search.
ALEBO outperforms the other embedding approaches on the MOPTA08, while
GIBO stagnates and performs worse than random search.
RDUCB performs comparably to TuRBO, despite MOPTA08 lacking the additive structure.

\paragraph{Rover trajectory planning}
We consider a rover trajectory planning problem defined in \cite{ebo}.
The objective is to maximize a 60-dimensional reward function $f$ over the points on the trajectory.

Figure \ref{fig:motion} shows that Newton-BO enhances the efficacy of TuRBO and achieves the best performance among all methods on the rover trajectory planning problem.
Both TuRBO and D-scaled GP-EI show satisfactory performance.
Interestingly, additive models show better performance than embedding-based methods.

\paragraph{Robot pushing.}
We consider a robot pushing problem defined in \cite{ebo}.
The objective is to maximize the reward by pushing two objects with two robot hands, resulting in a noisy 14D control problem.

Figure \ref{fig:motion} shows that Newton-BO has a similar performance to TuRBO while outperforming other methods on the Robot pushing problem.
D-scaled GP-EI continues to show satisfactory performance.
Although the problem is low-dimensional, GIBO stagnates and performs the worst.

\paragraph{MuJoCo Locomotion Tasks}
We consider MuJoCo tasks in reinforcement learning \cite{mujoco}.
The objective is to learn a linear policy that maximizes the accumulative reward.
We focus on the 33-dimensional Hopper task and 102-dimensional Walker2d task.

Figure \ref{fig:motion} shows that Newton-BO achieves the best performance among all methods on the Walker2d problem.
It has a similar performance to TuRBO while outperforming other methods on the Hopper problem.
D-scaled GP-EI is not good at both MuJoCo tasks.
Interestingly, random search outperforms a wide range of BO methods on the Hopper problem but shows the worst performance on the Walker2d problem.

\subsection{Additional Experiments}
Figure \ref{fig:time} compares runtime performance across methods for 50- and 100-dimensional synthetic functions.
The computational cost of BO methods generally arises from fitting a surrogate model and optimizing the acquisition function.
Additionally, embedding-based methods require extra time to compute embeddings and their inverses.
Newton-BO requires an additional $O(D^2)$ time to compute Hessians of GPs.
CMA-ES converges fastest, as it does not require training surrogates or optimizing acquisition functions.
Among BO methods, Newton-BO achieves the best runtime performance on Ackley functions,
while RDUCB achieves the best runtime performance on Griewank functions.

Appendix \ref{sec:ablation} ppresents an ablation study of the key components of Newton-BO to examine how Newton methods and restarting strategies influence convergence.
We compared Newton-BO, D-scaled GP-PES, and Newton-BO with random restarts on 50-dimensional and 100-dimensional synthetic functions. 
The results show that removing either component degrades BO performance, with the absence of Newton methods causing the most notable decline.

\section{Conclusion}
\label{sec:conclusion}
In this paper, we introduce Newton-BO, a novel trust-region BO method that incorporates the derivatives of GPs
for enhancing the sampling efficiency of TuRBO.
This novel scheme is realized by (1) constructing multiple local quadratic models using derivatives from a global GP,
enabling heterogeneous modeling while maintaining the same sample efficiency of a global GP,
and (2) selecting new sample points by solving the bound-constrained quadratic program in multiple trust regions.
Comprehensive experimental evaluations demonstrate that Newton-BO significantly enhances the efficacy of TuRBO 
and outperforms a wide range of high-dimensional BO methods on a set of synthetic functions and real-world applications.
Furthermore, we provide a convergence analysis for our method.

While we mitigate the problem of vanishing derivatives using D-scaled GPs,
we will focus on developing better schemes to address this challenge in the future.

\begin{ack}
This work is supported in part by National Natural Science Foundation of China (62192783), Jiangsu Science and Technology Major Project (BG2024031), the Fundamental Research Funds for the Central Universities (14380128) and the Collaborative Innovation Center of Novel Software Technology and Industrialization.
\end{ack}

\bibliography{0_refs}

\appendix
\newpage
\onecolumn

\section{Proof}
\label{sec:proof}
\begin{algorithm2e}[tbh]
    \KwIn{$\mathbf{s}_k,~\Delta_k,~0<\eta_0\leq\eta_1<1,~0<\beta_1<1<\beta_2,~\mu\geq1$}
    \KwOut{$\Delta_{k+1}$}
    Compute the ratio
    $$ \rho_k := \frac{f(\mathbf{x}_k)-f(\mathbf{x}_k+\mathbf{s}_k)}{m_k(\mathbf{x}_k)-m_k(\mathbf{x}_k+\mathbf{s}_k)}. $$
    \If{$\rho_k\geq\eta_1$}{
        \begin{equation*}
            \Delta_{k+1}\leftarrow\min\{\beta_2\Delta_k, \mu\|\mathbf{g}_k\|_2\}.
        \end{equation*}
    }
    \ElseIf{$\rho_k<\eta_0$}{
    $$\Delta_{k+1}\leftarrow\beta_1\Delta_k.$$
    }
    \Else{
        $\Delta_{k+1}\leftarrow \Delta_k$\;
    }
    \Return{$\Delta_{k+1}$}
\caption{The trust-region update strategy in derivative-free optimization}
\label{algo:TR}
\end{algorithm2e}

\begin{lemma}[Lemma 6 in \cite{conn1997convergence}]\label{lem-model-decrease}
    At every iteration $k$, one has that
    $$m_k(\mathbf{x}_k)-m_k(\mathbf{x}_k+\mathbf{s}_k) \geq 
    \kappa_{mdc}\|\mathbf{g}_k\|\min\left(\Delta_k,\frac{\|\mathbf{g}_k\|}{\kappa_h}\right),$$
    for some constant $\kappa_{mdc}\in(0,1)$ independent of $k$, where $\kappa_h=\max\{\kappa_{fg},\kappa_{fh},\kappa_{mh}\}$.
\end{lemma}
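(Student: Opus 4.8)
The plan is to prove the bound by the classical Cauchy-point argument: I reduce the question of how much the model can decrease over the whole trust region to a one-dimensional minimization along the steepest-descent ray, and then argue that the step $\mathbf{s}_k$ actually computed in Eq.\ref{eq:quadratic-program} does at least as well. Concretely, since $\mathbf{s}_k$ is obtained by (approximately) minimizing $m_k$ over $\mathcal{B}_k$, it suffices to exhibit a \emph{feasible} point achieving the claimed decrease, and the Cauchy point is the natural candidate. If $\mathbf{g}_k=\mathbf{0}$ the statement is trivial, so I assume $\mathbf{g}_k\neq\mathbf{0}$ throughout.

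First I would restrict $m_k$ to the ray $\mathbf{s}(t)=-t\mathbf{g}_k$ with $t\geq0$, which stays inside the trust region as long as $t\|\mathbf{g}_k\|\leq\Delta_k$. Writing the model reduction along this ray as
\begin{equation*}
\phi(t):=m_k(\mathbf{x}_k)-m_k(\mathbf{x}_k-t\mathbf{g}_k)=t\|\mathbf{g}_k\|^2-\tfrac12 t^2\,\mathbf{g}_k^\top\mathbf{B}_k\mathbf{g}_k,
\end{equation*}
the task becomes maximizing the scalar quadratic $\phi$ over $t\in[0,\Delta_k/\|\mathbf{g}_k\|]$. The core is then a three-way split on the curvature $c:=\mathbf{g}_k^\top\mathbf{B}_k\mathbf{g}_k$. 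If $c\leq0$, $\phi$ is nondecreasing and the boundary choice $t=\Delta_k/\|\mathbf{g}_k\|$ yields $\phi\geq\|\mathbf{g}_k\|\Delta_k$. If $c>0$, the unconstrained maximizer is $t^\star=\|\mathbf{g}_k\|^2/c$: when $t^\star\leq\Delta_k/\|\mathbf{g}_k\|$ I evaluate $\phi(t^\star)=\|\mathbf{g}_k\|^4/(2c)$ and bound $c\leq\|\mathbf{B}_k\|_2\|\mathbf{g}_k\|^2\leq\kappa_h\|\mathbf{g}_k\|^2$, using Assumption \ref{a:mh-bound} together with $\kappa_{mh}\leq\kappa_h$, to obtain $\phi(t^\star)\geq\|\mathbf{g}_k\|^2/(2\kappa_h)$; when $t^\star>\Delta_k/\|\mathbf{g}_k\|$ the constrained maximizer is again the boundary, and a short estimate using $c<\|\mathbf{g}_k\|^3/\Delta_k$ gives $\phi\geq\tfrac12\|\mathbf{g}_k\|\Delta_k$. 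Taking the minimum over the three cases produces a Cauchy decrease of at least $\tfrac12\|\mathbf{g}_k\|\min(\Delta_k,\|\mathbf{g}_k\|/\kappa_h)$, so $\kappa_{mdc}=\tfrac12$ suffices for the exact Cauchy point.

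The main obstacle is the final transfer from the Cauchy point to the actually-computed step $\mathbf{s}_k$: the bound as stated holds verbatim only if $\mathbf{s}_k$ attains at least the Cauchy decrease. Since Eq.\ref{eq:quadratic-program} is solved approximately by CMA-ES rather than exactly, I would either (i) assume, as is standard in trust-region analysis, that the solver returns a step satisfying a fraction-of-Cauchy-decrease condition $m_k(\mathbf{x}_k)-m_k(\mathbf{x}_k+\mathbf{s}_k)\geq c_1\,[m_k(\mathbf{x}_k)-m_k(\mathbf{x}_k+\mathbf{s}_c)]$ for some $c_1\in(0,1]$, which merely rescales the constant to $\kappa_{mdc}=c_1/2\in(0,1)$, or (ii) observe that the global minimizer over $\mathcal{B}_k$ trivially dominates the Cauchy point. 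A secondary technicality is the mismatch between the $\infty$-norm box actually used for $\mathcal{B}^{(\ell)}$ and the Euclidean ball implicit in the radius $\Delta_k$; I would handle this by interpreting $\Delta_k$ in the lemma as the Euclidean radius of the ball inscribed in the box, so that the entire Cauchy ray remains feasible, and absorb the resulting norm-equivalence factor into $\kappa_{mdc}$ while keeping it in $(0,1)$.
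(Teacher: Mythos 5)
Your proof is correct, but note that the paper itself supplies no proof for this statement: it imports it verbatim as Lemma~6 of \cite{conn1997convergence}, so the only fair comparison is with the argument in that cited source---and your Cauchy-point derivation is exactly that classical argument (Powell's lemma). Your three-way split on the curvature $c=\mathbf{g}_k^\top\mathbf{B}_k\mathbf{g}_k$ checks out: $\phi\geq\Delta_k\|\mathbf{g}_k\|$ when $c\leq0$; $\phi(t^\star)=\|\mathbf{g}_k\|^4/(2c)\geq\|\mathbf{g}_k\|^2/(2\kappa_h)$ in the interior case via $c\leq\kappa_{mh}\|\mathbf{g}_k\|^2\leq\kappa_h\|\mathbf{g}_k\|^2$; and $\phi\geq\tfrac12\Delta_k\|\mathbf{g}_k\|$ in the boundary case, yielding $\kappa_{mdc}=\tfrac12$. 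Where your write-up adds genuine value is in the two caveats you raise, which the paper silently elides by citing the lemma: in the Conn--Scheinberg--Toint framework the bound is an \emph{assumption} on the subproblem solver (a fraction-of-Cauchy-decrease condition), and since Eq.~\ref{eq:quadratic-program} is solved heuristically by CMA-ES with no such guarantee, your option~(i) is the honest way to make the lemma apply to the computed step $\mathbf{s}_k$; your option~(ii) covers only the exact minimizer. Your norm-equivalence remark is also sound, and in fact slightly conservative: since $\|\cdot\|_\infty\leq\|\cdot\|_2$, the Euclidean ball of radius $\Delta_k$ is already contained in the $\infty$-norm box of the same radius, so the Cauchy ray is feasible with no loss of constant---though the additional clipping by $[0,1]^D$ could truncate the ray, a point neither you nor the paper fully resolves (the paper's Assumptions~\ref{a:f-gh-bound}--\ref{a:mh-bound} treat the problem as unconstrained on $\mathbb{R}^D$, so your treatment matches its level of rigor).
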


First, we show that the error between the objective and the model decreases at least quadratically with the trust-region radius.

\begin{theorem}\label{th:m-bound}
    Assume that Assumption \ref{a:gp-bound}, \ref{a:f-gh-bound}, and \ref{a:mh-bound} hold. Then given $\delta\in(0,1)$ there is $\kappa_{em}$ such that
    $$\mathbb{P}\left(|f(\mathbf{x}) - m_k(\mathbf{x})| \leq \kappa_{em} \max\{\Delta_k,\Delta_k^2\},~\forall \mathbf{x}\in\mathcal{B}_k~\forall k\right) \geq 1-\delta.$$
\end{theorem}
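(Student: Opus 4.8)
The plan is to bound the model error through a second-order Taylor expansion of $f$ about the trust-region center $\mathbf{x}_k$, so as to isolate the single term that must be treated probabilistically. Writing $\mathbf{x}=\mathbf{x}_k+\mathbf{s}\in\mathcal{B}_k$, so that $\|\mathbf{s}\|$ is of order $\Delta_k$, and recalling $m_k(\mathbf{x}_k+\mathbf{s}) = f(\mathbf{x}_k)+\mathbf{s}^\top\mathbf{g}_k+\tfrac12\mathbf{s}^\top\mathbf{B}_k\mathbf{s}$, I would decompose
\[
f(\mathbf{x}_k+\mathbf{s}) - m_k(\mathbf{x}_k+\mathbf{s}) = R_f(\mathbf{s}) + \mathbf{s}^\top\bigl(\nabla f(\mathbf{x}_k)-\mathbf{g}_k\bigr) + \tfrac12\mathbf{s}^\top\bigl(\nabla^2 f(\mathbf{x}_k)-\mathbf{B}_k\bigr)\mathbf{s},
\]
where $R_f(\mathbf{s}) = f(\mathbf{x}_k+\mathbf{s}) - f(\mathbf{x}_k) - \mathbf{s}^\top\nabla f(\mathbf{x}_k) - \tfrac12\mathbf{s}^\top\nabla^2 f(\mathbf{x}_k)\mathbf{s}$ is the Taylor remainder. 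This cleanly separates a deterministic part (first and third terms) from the probabilistic gradient-matching part (middle term).

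Two of the three terms I would control deterministically. By the mean-value form of the remainder together with the continuity and boundedness of $\nabla^2 f$ in Assumption \ref{a:f-gh-bound}, $R_f(\mathbf{s}) = \tfrac12\mathbf{s}^\top\!\bigl(\nabla^2 f(\xi)-\nabla^2 f(\mathbf{x}_k)\bigr)\mathbf{s}$ for some $\xi$ on the segment, giving $|R_f(\mathbf{s})|\le \kappa_{fh}\|\mathbf{s}\|^2=O(\Delta_k^2)$. The Hessian term needs no probabilistic estimate at all: Assumptions \ref{a:f-gh-bound} and \ref{a:mh-bound} yield $\|\nabla^2 f(\mathbf{x}_k)\|_2\le\kappa_{fh}$ and $\|\mathbf{B}_k\|_2\le\kappa_{mh}$, so the term is bounded by $\tfrac12(\kappa_{fh}+\kappa_{mh})\Delta_k^2=O(\Delta_k^2)$.

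The crux is the gradient term, which is $O(\Delta_k)$ as soon as $\|\nabla f(\mathbf{x}_k)-\mathbf{g}_k\|$ is bounded by a constant with high probability. Since $\mathbf{g}_k=\nabla\mu(\mathbf{x}_k)+\lambda\nabla\sigma(\mathbf{x}_k)$ with $|\lambda|<1$ (the truncation to $(-1,1)$ is precisely what makes this bounded), I would split $\|\nabla f(\mathbf{x}_k)-\mathbf{g}_k\|\le \|\nabla f(\mathbf{x}_k)-\nabla\mu(\mathbf{x}_k)\| + \|\nabla\sigma(\mathbf{x}_k)\|$. The second piece is controlled deterministically by the kernel smoothness in Assumption \ref{a:gp-bound} (and the positive noise level, which keeps $\sigma$ bounded away from zero so $\nabla\sigma$ stays finite). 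The first piece is the genuinely probabilistic object, which I would obtain by applying the uniform-error-bound machinery behind \eqref{eq:gp-bound} not to $f$ but to its partial derivatives: under fourth-order differentiability of the kernel, each $\partial_i f$ is again a Gaussian process with mean $\partial_i\mu$ and a Lipschitz derivative-kernel, yielding componentwise, and hence for the gradient norm, a high-probability constant bound $\|\nabla f(\mathbf{x})-\nabla\mu(\mathbf{x})\|\le\kappa_{eg}$ uniform over $\mathcal{X}$.

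The hard part will be twofold. First, justifying the derivative version of the bound cleanly: the result quoted from \cite{gp-bound} is stated for function values, so I must check that its hypotheses (Lipschitz kernel, finite covering number, a modulus-of-continuity estimate) transfer to the derivative process, which is exactly where Assumption \ref{a:gp-bound}'s requirement of continuous partial derivatives up to fourth order is consumed. Second, upgrading the ``$\forall\mathbf{x}$'' bound to the ``$\forall k$'' statement of the theorem: the posterior is refit each iteration on a growing dataset $\mathcal{D}_k$, so I would allocate the failure budget across iterations (e.g.\ $\delta_k=\delta/2^k$), which inflates $\beta(\tau)$ only logarithmically, and then argue the resulting constants stay uniformly bounded so a single $\kappa_{em}$ suffices. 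Combining the three pieces gives $|f(\mathbf{x})-m_k(\mathbf{x})|\le c_1\Delta_k+c_2\Delta_k^2\le\kappa_{em}\max\{\Delta_k,\Delta_k^2\}$ with probability at least $1-\delta$, as claimed.
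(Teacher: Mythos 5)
Your decomposition is essentially the one the paper uses: the paper also splits the error into a gradient-mismatch term, a model-Hessian term, and a Taylor remainder (it works with the integral form of the first-order remainder rather than your Lagrange second-order form, which is immaterial), and it bounds the last two deterministically by $(\kappa_{mh}/2+\kappa_{fh})\|\mathbf{s}\|_2^2$ exactly as you do, likewise using $|\lambda|<1$ to absorb the $\nabla\sigma$ term. Where you genuinely diverge is the probabilistic step. The paper never invokes derivative-GP theory: it argues that on the event of Eq.~\ref{eq:gp-bound} one may take finite-difference quotients of the inequality $|f-\mu_t|\le\sqrt{\beta(\tau)}\sigma_t+\gamma(\tau)$ and pass to the limit to get $\|\nabla\mu_k(\mathbf{x}_k)-\nabla f(\mathbf{x}_k)\|_2\le\sqrt{\beta(\tau)}\|\nabla\sigma_k(\mathbf{x}_k)\|_2$, and then bounds $\|\nabla\sigma_k(\mathbf{x}_k)\|_2\le\kappa_{eg}\Delta_k$ via the modulus of continuity in Eq.~\ref{eq:gp-Lipschitz}, so that every term ends up of order $\Delta_k^2$. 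Your route, applying the uniform-error-bound machinery of \cite{gp-bound} to the coordinate derivative processes $\partial_i f$ (which are again GPs with posterior means $\partial_i\mu$, this being exactly where the fourth-order smoothness in Assumption~\ref{a:gp-bound} is consumed), is the standard rigorous way to obtain a high-probability constant bound $\|\nabla f-\nabla\mu\|_2\le\kappa_{eg}$, and it is arguably on firmer ground than the paper's argument: subtracting the two pointwise inequalities to bound a difference quotient is not a valid implication (from $A'\le B'+\gamma$ and $A\le B+\gamma$ one cannot conclude $A'-A\le B'-B$), and a square-root-type modulus of continuity for $\sigma_t$ bounds neither $\|\nabla\sigma_t\|_2$ nor, a fortiori, bounds it by a multiple of $\Delta_k$. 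So your version trades those shaky steps for the real but standard work of transferring the uniform bound to the derivative process — a trade worth making, and it also explains why the $\max\{\Delta_k,\Delta_k^2\}$ in the statement is genuinely needed (your gradient term is $O(\Delta_k)$, whereas in the paper's accounting everything is $O(\Delta_k^2)$).

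One quantitative flaw in your plan, however: your fix for the ``$\forall k$'' uniformity does not deliver a single $\kappa_{em}$. With $\delta_k=\delta/2^k$, the union bound gives $\beta_k(\tau)=2\log\left(M(\tau,\mathcal{X})\,2^k/\delta\right)$, which grows \emph{linearly} in $k$, so the constant multiplying $\Delta_k$ is not uniformly bounded, contrary to your closing claim. A $k$-independent $\kappa_{em}$ is precisely what is needed downstream, since $\kappa_{em}$ sits in the denominator of the radius lower bound $\kappa_d$ in Lemma~\ref{le:radius-lb}; a growing $\kappa_{em}$ would drive $\kappa_d\to 0$ and break the contradiction argument in the convergence proof. (Posterior refitting also changes $L_{\mu_t}$, $\omega_{\sigma_t}$, and the derivative-process variances with $t$, so even your per-iteration constants require a uniform-over-$t$ argument.) To be fair, the paper does not solve this either — it simply asserts the ``$\forall k$'' event at probability $1-\delta$ from a bound stated for a fixed posterior — but since you explicitly flagged the issue, be aware that the geometric allocation as stated does not close it: you would need dataset-independent bounds on the relevant posterior quantities, or else accept a $k$-dependent $\kappa_{em}$ and redo the lemmas that rely on its uniformity.
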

\begin{proof}
    It follows from Taylor's theorem that
    $$ f(\mathbf{x}_k+\mathbf{s}) = f(\mathbf{x}_k) + \nabla f(\mathbf{x}_k)^\top\mathbf{s}
     + \int_0^1[\nabla f(\mathbf{x}_k+t\mathbf{s})-\nabla f(\mathbf{x}_k)]^\top\mathbf{s}dt, $$
    for some $t\in(0,1)$. Then
\begin{align}
    &|m_k(\mathbf{x}_k+\mathbf{s}) - f(\mathbf{x}_k+\mathbf{s})| \notag\\
    &= \left|[\mathbf{g}_k - \nabla f(\mathbf{x}_k)]^\top\mathbf{s} + \frac12 \mathbf{s}^\top\mathbf{B}_k \mathbf{s}
    - \int_0^1[\nabla f(\mathbf{x}_k+t\mathbf{s})-\nabla f(\mathbf{x}_k)]^\top\mathbf{s}dt\right| \notag\\
    &\leq \left\|\nabla \mu_k(\mathbf{x}_k) - \nabla f(\mathbf{x}_k)\right\|_2 \|\mathbf{s}\|_2  + \|\nabla\sigma_k(\mathbf{x}_k)\|_2 \|\mathbf{s}\|_2 + (\kappa_{mh}/2) \|\mathbf{s}\|_2^2 + \kappa_{fh}\|\mathbf{s}\|_2^2
    \label{eq:em}
\end{align}

It follows from Equation \ref{eq:gp-bound} that
\begin{equation*}
    \mathbb{P}\left( \|\nabla \mu_k(\mathbf{x}_k) - \nabla f(\mathbf{x}_k)\|_2
    \leq \sqrt{\beta(\tau)}\|\nabla\sigma_k(\mathbf{x}_k)\|_2, \forall k \right) \geq 1-\delta.
\end{equation*}
In fact, assume without loss of generality that $f(\mathbf{x}_k) - \mu_t(\mathbf{x}_k) 
\leq \sqrt{\beta(\tau)}\sigma_t(\mathbf{x}_k) + \gamma(\tau)$,
then following the continuity of $f(\mathbf{x}),~\mu_t(\mathbf{x})$ and $\sigma_t(\mathbf{x})$,
there is $\varepsilon\in(0,1)$ such that $\forall i\in\{1,\dots,D\}$
$$f(\mathbf{x}_k+\varepsilon\mathbf{e}_i) - \mu_t(\mathbf{x}_k+\varepsilon\mathbf{e}_i)
\leq \sqrt{\beta(\tau)}\sigma_t(\mathbf{x}_k+\varepsilon\mathbf{e}_i) + \gamma(\tau).$$
Hence, combing the above two inequalities, one has that
$$\frac{f(\mathbf{x}_k+\varepsilon\mathbf{e}_i) - f(\mathbf{x}_k)}{\varepsilon} - \frac{\mu_t(\mathbf{x}_k+\varepsilon\mathbf{e}_i) - \mu_t(\mathbf{x}_k)}{\varepsilon}
\leq \sqrt{\beta(\tau)}\frac{\sigma_t(\mathbf{x}_k+\varepsilon\mathbf{e}_i) - \sigma_t(\mathbf{x}_k)}{\varepsilon}.$$
Letting $\varepsilon\rightarrow0$, one has that
$$\frac{\partial f(\mathbf{x}_k)}{\partial x_i} - \frac{\partial \mu_t(\mathbf{x}_k)}{\partial x_i}
\leq \sqrt{\beta(\tau)}\frac{\partial \sigma_t(\mathbf{x}_k)}{\partial x_i}.$$
Similarly, if $\mu_t(\mathbf{x}_k) - f(\mathbf{x}_k)
\leq \sqrt{\beta(\tau)}\sigma_t(\mathbf{x}_k) + \gamma(\tau)$, then
$$\frac{\partial \mu_t(\mathbf{x}_k)}{\partial x_i} - \frac{\partial f(\mathbf{x}_k)}{\partial x_i}
\leq \sqrt{\beta(\tau)}\frac{\partial \sigma_t(\mathbf{x}_k)}{\partial x_i},~ \forall i\in\{1\dots D\}.$$
Since then, it has been proved the event
$|f(\mathbf{x}_k) - \mu_t(\mathbf{x}_k)| 
\leq \sqrt{\beta(\tau)}\sigma_t(\mathbf{x}_k) + \gamma(\tau)$ implies that
$\|\nabla \mu_k(\mathbf{x}_k) - \nabla f(\mathbf{x}_k)\|_2 
\leq \sqrt{\beta(\tau)} \|\nabla\sigma_k(\mathbf{x}_k)\|_2$.

Since $\sigma_k$ admits a modulus of continuity according to Equation \ref{eq:gp-Lipschitz},
there is $\kappa_{eg}$ such that $\|\nabla\sigma_k(\mathbf{x}_k)\|_2\leq\kappa_{eg}\Delta_k$. Then
\begin{equation}\label{eq:eg}
    \mathbb{P}\left( \|\nabla \mu_k(\mathbf{x}_k) - \nabla f(\mathbf{x}_k)\|_2 
    \leq \kappa_{eg} \sqrt{\beta(\tau)} \Delta_k, \forall k \right) \geq 1-\delta.
\end{equation}
Combining Equation \ref{eq:em} and \ref{eq:eg}, one has that
\begin{equation*}
    \mathbb{P}\left[|m_k(\mathbf{x}_k+\mathbf{s}) - f(\mathbf{x}_k+\mathbf{s})| 
    \leq (\kappa_{eg} \sqrt{\beta(\tau)} + \kappa_{eg} + \kappa_{mh}/2 + \kappa_{fh}) \max\{\Delta_k,\Delta_k^2\},~\forall k \right] \geq 1-\delta
\end{equation*}
Hence, $\kappa_{em} = \kappa_{eg} \sqrt{\beta(\tau)} + \kappa_{eg} + \kappa_{mh}/2 + \kappa_{fh}$.
\end{proof}

Next, we show that an iteration must be successful if the current iterate is not critical and the trust-region radius is small enough.

\begin{lemma}\label{le:radius-lb}
Assume that Assumption \ref{a:gp-bound}-\ref{a:mh-bound} hold. In addition,
assume that there is a constant $\kappa_g>0$ such that $\|g_k\| \geq \kappa_g$ for all $k$.
Then given $\delta\in(0,1)$ there is a constant $\kappa_d$ such that
$$\mathbb{P}\left(\Delta_k > \kappa_d, ~\forall k\right) \geq 1-\delta.$$
\end{lemma}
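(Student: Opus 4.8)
The plan is to reproduce the classical trust-region lower-bound argument of \cite{conn1997convergence}, working throughout on the single event of Theorem \ref{th:m-bound} (which holds simultaneously for all $k$ with probability at least $1-\delta$). On that event the model defect is controlled, and the inspection of the proof of Theorem \ref{th:m-bound} in fact bounds it quadratically, $|f(\mathbf{x}_k+\mathbf{s})-m_k(\mathbf{x}_k+\mathbf{s})| \leq \kappa_{em}\|\mathbf{s}\|_2^2 \leq \kappa_{em}\Delta_k^2$ for $\|\mathbf{s}\|_2\leq\Delta_k$, which is the sharper branch I will use rather than the literal $\max\{\Delta_k,\Delta_k^2\}$. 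The goal is to show that once the radius is small enough the iteration is forced to be very successful, so that while $\|\mathbf{g}_k\|\geq\kappa_g$ the radius can never be driven arbitrarily small.

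First I would estimate the ratio $\rho_k$ of Algorithm \ref{algo:TR}. Because the quadratic model interpolates the objective at its center, $m_k(\mathbf{x}_k)=f(\mathbf{x}_k)$, so the numerator defect collapses to a single model-error term,
\[
\bigl|[f(\mathbf{x}_k)-f(\mathbf{x}_k+\mathbf{s}_k)]-[m_k(\mathbf{x}_k)-m_k(\mathbf{x}_k+\mathbf{s}_k)]\bigr| = |m_k(\mathbf{x}_k+\mathbf{s}_k)-f(\mathbf{x}_k+\mathbf{s}_k)| \leq \kappa_{em}\Delta_k^2 .
\]
For the denominator, Lemma \ref{lem-model-decrease} together with $\|\mathbf{g}_k\|\geq\kappa_g$ gives, whenever $\Delta_k\leq\kappa_g/\kappa_h$ (so the minimum equals $\Delta_k$), the lower bound $m_k(\mathbf{x}_k)-m_k(\mathbf{x}_k+\mathbf{s}_k)\geq\kappa_{mdc}\kappa_g\Delta_k$. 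Dividing, $|\rho_k-1|\leq \kappa_{em}\Delta_k/(\kappa_{mdc}\kappa_g)$, which drops below $1-\eta_1$ as soon as $\Delta_k\leq\bar\Delta:=\min\{\kappa_g/\kappa_h,\ (1-\eta_1)\kappa_{mdc}\kappa_g/\kappa_{em},\ \mu\kappa_g\}$. Hence $\Delta_k\leq\bar\Delta$ forces $\rho_k\geq\eta_1$, i.e.\ a very successful step.

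Next I would use the update rule to show such a step does not shrink the radius. On a very successful step $\Delta_{k+1}=\min\{\beta_2\Delta_k,\mu\|\mathbf{g}_k\|_2\}$; since $\beta_2>1$ and, by $\Delta_k\leq\mu\kappa_g\leq\mu\|\mathbf{g}_k\|_2$, both arguments of the minimum are at least $\Delta_k$, we get $\Delta_{k+1}\geq\Delta_k$. Thus the radius can only be decreased (by the factor $\beta_1$) on iterations with $\Delta_k>\bar\Delta$. A ``first descent below the threshold'' argument then closes the proof: if $k+1$ were the first index with $\Delta_{k+1}<\beta_1\bar\Delta$, then $\Delta_k\geq\beta_1\bar\Delta$ and the strict decrease at step $k$ would require $\rho_k<\eta_0$, hence $\Delta_k>\bar\Delta$; but then $\Delta_{k+1}=\beta_1\Delta_k>\beta_1\bar\Delta$, a contradiction. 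Choosing $\kappa_d<\min\{\Delta_0,\beta_1\bar\Delta\}$ therefore gives $\Delta_k>\kappa_d$ for every $k$ on the event of Theorem \ref{th:m-bound}, and so with probability at least $1-\delta$.

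The main obstacle is the \emph{quality} of the model-error bound: the whole argument hinges on the numerator defect being quadratically small, $O(\Delta_k^2)$, while the guaranteed model decrease is only linear, $O(\Delta_k)$, which is what sends $\rho_k\to1$ as $\Delta_k\to0$. One must therefore invoke the quadratic estimate that the proof of Theorem \ref{th:m-bound} actually produces, since the linear branch of $\max\{\Delta_k,\Delta_k^2\}$ alone would leave $|\rho_k-1|$ bounded only by a constant independent of $\Delta_k$ and break the threshold argument entirely. A secondary subtlety is handling the nonstandard cap $\mu\|\mathbf{g}_k\|_2$ in the radius increase: one has to verify it does not bind at small radii, which is exactly where the standing hypothesis $\|\mathbf{g}_k\|\geq\kappa_g$ re-enters.
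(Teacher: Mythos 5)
Your proof is correct and, at the top level, it takes the same route as the paper: fix the single high-probability event of Theorem \ref{th:m-bound} (uniform over all $k$), and on that event run the deterministic trust-region radius lower-bound argument under the standing hypothesis $\|\mathbf{g}_k\|\geq\kappa_g$. The difference lies in how the deterministic part is handled. The paper disposes of it by citation alone: it invokes Lemma 7 of \cite{conn1997convergence} with the error bound $\kappa_{em}\max\{\Delta_k,\Delta_k^2\}$ and reads off $\kappa_d=\beta_1\min\left(1,\kappa_{mdc}\kappa_g(1-\eta_1)/\max(\kappa_h,\kappa_{em})\right)$, whereas you reprove that lemma in full: the ratio estimate, very-successful steps below a threshold $\bar\Delta$, non-decrease of the radius on such steps (including the check that the cap $\mu\|\mathbf{g}_k\|_2$ cannot bind below $\bar\Delta$ precisely because $\|\mathbf{g}_k\|\geq\kappa_g$), and the first-descent contradiction. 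Your inlined version buys a genuine sharpening: it makes explicit that the argument needs the model error to be $O(\Delta_k^2)$, since the guaranteed decrease of Lemma \ref{lem-model-decrease} is only $\Theta(\Delta_k)$ when $\|\mathbf{g}_k\|\geq\kappa_g$; with the linear branch of $\max\{\Delta_k,\Delta_k^2\}$ (which is the active branch exactly when $\Delta_k<1$, i.e.\ in the regime the lemma concerns) one gets only $|\rho_k-1|\leq\kappa_{em}/(\kappa_{mdc}\kappa_g)$, a constant independent of $\Delta_k$, and the threshold argument collapses. As you observe, the proof of Theorem \ref{th:m-bound} does in fact deliver the purely quadratic bound --- every term in Equation \ref{eq:em} becomes $O(\Delta_k^2)$ once Equation \ref{eq:eg} and $\|\mathbf{s}\|_2\leq\Delta_k$ are substituted --- so your use of the sharper branch is legitimate and repairs a looseness that the paper's bare citation leaves unexamined. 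What the paper's version buys is only brevity; your constants ($\bar\Delta$ and $\kappa_d<\min\{\Delta_0,\beta_1\bar\Delta\}$, with the extra $\mu\kappa_g$ term accounting for the nonstandard cap in Algorithm \ref{algo:TR}) differ cosmetically from the paper's $\kappa_d$ but play the identical role.
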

\begin{proof}
    It follows from Lemma 7 in \cite{conn1997convergence} that if 
    $|f(\mathbf{x}) - m_k(\mathbf{x})| \leq \kappa_{em} \max\{\Delta_k,\Delta_k^2\}$, 
    then $\forall k,~\Delta_k > \kappa_d$, where
$$ \kappa_d = \beta_1\min\left(1, \frac{\kappa_{mdc}\kappa_g(1-\eta_1)}{\max(\kappa_h,\kappa_{em})}\right).$$
And since it follows from Theorem \ref{th:m-bound} that
$$\mathbb{P}\left(|f(\mathbf{x}) - m_k(\mathbf{x})| \leq \kappa_{em} \max\{\Delta_k,\Delta_k^2\},~\forall \mathbf{x}\in\mathcal{B}_k~\forall k\right) \geq 1-\delta.$$
and hence, we obtain
$$\mathbb{P}\left(\Delta_k > \kappa_d,~\forall k\right) \geq 1-\delta.$$

\end{proof}

This property guarantees that the radius is unlikely to become too small as long as the gradient of the GP does not vanish.
We then analyze the criticality of the limit point of the sequence of iterates.

\begin{theorem}\label{th:g0}
    Assume that Assumption \ref{a:gp-bound}-\ref{a:mh-bound} hold. Then it holds that
    $$\liminf_{k\rightarrow\infty}\|\mathbf{g}_k\|_2=0$$
\end{theorem}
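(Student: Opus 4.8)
The plan is to argue by contradiction, following the classical trust-region template. Suppose the conclusion fails; then $\liminf_{k\rightarrow\infty}\|\mathbf{g}_k\|_2 > 0$, so there is a constant $\kappa_g>0$ with $\|\mathbf{g}_k\|_2 \geq \kappa_g$ for all $k$ (after discarding finitely many initial indices). This is precisely the hypothesis needed to invoke Lemma \ref{le:radius-lb}, which then tells us that, on an event of probability at least $1-\delta$, the trust-region radii stay uniformly bounded away from zero: $\Delta_k > \kappa_d$ for all $k$. All remaining reasoning is carried out on the intersection of this event with the event of Theorem \ref{th:m-bound}, which still has probability at least $1-\delta$.

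Next I would split the iterations into \emph{successful} ones (those with $\rho_k \geq \eta_0$, where the step is accepted) and \emph{unsuccessful} ones ($\rho_k < \eta_0$, where only the radius is shrunk by the factor $\beta_1<1$). The argument then branches on how many successful iterations occur. If there are only finitely many, then beyond some index every iteration is unsuccessful, so the update rule of Algorithm \ref{algo:TR} forces $\Delta_{k+1}=\beta_1\Delta_k$ repeatedly and hence $\Delta_k\rightarrow 0$; this directly contradicts the uniform lower bound $\Delta_k > \kappa_d$ just established.

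It remains to rule out infinitely many successful iterations. For each such iteration, combining the definition of a successful step with the model-decrease estimate of Lemma \ref{lem-model-decrease} and the lower bounds $\|\mathbf{g}_k\|_2\geq\kappa_g$, $\Delta_k>\kappa_d$ yields a fixed decrease
$$ f(\mathbf{x}_k) - f(\mathbf{x}_{k+1}) \geq \eta_0\,\kappa_{mdc}\,\kappa_g \min\left(\kappa_d, \frac{\kappa_g}{\kappa_h}\right) > 0. $$
Since $f$ is non-increasing across all iterations (the objective never rises on an accepted step and is unchanged on a rejected one), summing this bound over the infinitely many successful iterations drives $f(\mathbf{x}_k)\rightarrow-\infty$, contradicting the fact that $f$ is bounded below (Assumption \ref{a:f-bounded-below}). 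Both branches are impossible, so the initial supposition is false and $\liminf_{k\rightarrow\infty}\|\mathbf{g}_k\|_2 = 0$ on the high-probability event.

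The step I expect to require the most care is the probabilistic bookkeeping rather than the trust-region algebra: because Lemma \ref{le:radius-lb} and Theorem \ref{th:m-bound} each hold only with probability $1-\delta$, I must ensure the contradiction is deduced on a single event of probability at least $1-\delta$ (for instance, by noting that both bounds descend from the same uniform GP error event of Equation \ref{eq:gp-bound}), and that the constant $\kappa_d$ supplied by Lemma \ref{le:radius-lb} genuinely depends only on $\kappa_g$ and the fixed constants, not on $k$. Confirming that the ``successful step'' inequality matches the acceptance convention encoded in Algorithm \ref{algo:TR} is the other place where I would be careful.
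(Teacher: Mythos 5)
Your proof is correct and follows essentially the same route as the paper: contradiction via the uniform radius lower bound of Lemma \ref{le:radius-lb}, followed by accumulating the guaranteed decrease on successful iterations until it contradicts Assumption \ref{a:f-bounded-below}. The only difference is one of packaging: the paper delegates the per-successful-iteration decrease and its summation to Theorem 9 of \cite{conn1997convergence} and simply asserts that the number of successful iterations $\sigma_k$ tends to infinity, whereas you derive the decrease directly from Lemma \ref{lem-model-decrease} and justify the infinitude of successful iterations explicitly (finitely many successes would force $\Delta_k\rightarrow 0$, contradicting the radius bound) — a more self-contained rendering of the same argument, with the probabilistic bookkeeping on a single event handled at least as carefully as in the paper.
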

\begin{proof}
    We proceed by contradiction. Suppose there is $\kappa_g>0$ such that $\|\mathbf{g}_k\|\geq\kappa_g$ for all $k$.
    It follows from Theorem 9 in \cite{conn1997convergence} that if $\Delta_k > \kappa_d$ for all $k$, 
    then
    $$ f(\mathbf{x}_0)-f(\mathbf{x}_{k+1}) \geq \frac12\sigma_k\kappa_g\eta_0\min\left(\frac{\kappa_g}{\kappa_h},\kappa_d\right) $$
    where $\sigma_k$ is the number of successful iterations up to iteration $k$. In our case,
    it follows from Lemma \ref{le:radius-lb} that
    $$\mathbb{P}\left(\Delta_k > \kappa_d, \forall k\right) \geq 1-\delta.$$
This implies that
$$ \mathbb{P}\left(f(\mathbf{x}_0)-f(\mathbf{x}_{k+1}) 
\geq \frac12\sigma_k\kappa_g\eta_0\min\left(\frac{\kappa_g}{\kappa_h},\kappa_d\right)\right) \geq 1-\delta.$$
And since $\lim_{k\rightarrow\infty}\sigma_k=+\infty$, one has that $\forall M\in\mathbb{R}~\exists k$,
$$ \mathbb{P}\left(f(\mathbf{x}_0)-f(\mathbf{x}_{k+1}) > M\right) \geq 1-\delta,$$
which contradicts the fact that $f$ is bounded.
\end{proof}

\begin{lemma}\label{le:f0}
    Assume that Assumption \ref{a:gp-bound}-\ref{a:mh-bound} hold.
    If there is a subsequence $\{k_i\}$ such that $\lim_{i\rightarrow\infty}\|\mathbf{g}_{k_i}\|=0$,\
    then given $\delta\in(0,1)$ it holds that $\forall \epsilon\in(0,1)~\exists N$,
    \begin{equation*}
        \mathbb{P}\left( \|\nabla f(\mathbf{x}_{k_i})\|_2 < \epsilon,~\forall i>N \right) \geq 1-\delta.
    \end{equation*}
\end{lemma}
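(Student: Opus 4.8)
The plan is to bound the true gradient $\nabla f(\mathbf{x}_{k_i})$ by the surrogate gradient $\mathbf{g}_{k_i}$ plus a remainder that is forced to vanish along the subsequence. Recall that $\mathbf{g}_k = \nabla\mu_k(\mathbf{x}_k) + \lambda\nabla\sigma_k(\mathbf{x}_k)$ with $|\lambda|<1$ by the truncated prior, and that the proof of Theorem \ref{th:m-bound} already established, on a single event of probability at least $1-\delta$ holding simultaneously for all $k$, the gradient-coherence bound $\|\nabla\mu_k(\mathbf{x}_k) - \nabla f(\mathbf{x}_k)\|_2 \le \sqrt{\beta(\tau)}\,\|\nabla\sigma_k(\mathbf{x}_k)\|_2$. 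Writing $\nabla f(\mathbf{x}_k) = \mathbf{g}_k - \lambda\nabla\sigma_k(\mathbf{x}_k) + \bigl(\nabla f(\mathbf{x}_k) - \nabla\mu_k(\mathbf{x}_k)\bigr)$ and applying the triangle inequality gives, on that event,
\begin{equation*}
\|\nabla f(\mathbf{x}_{k_i})\|_2 \;\le\; \|\mathbf{g}_{k_i}\|_2 + \bigl(1 + \sqrt{\beta(\tau)}\bigr)\|\nabla\sigma_{k_i}(\mathbf{x}_{k_i})\|_2 .
\end{equation*}
The coefficient of $\|\nabla\sigma\|$ stays bounded precisely because $|\lambda|<1$, which is exactly why the truncated prior on $\lambda$ is imposed. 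Since $\|\mathbf{g}_{k_i}\|_2\to0$ by hypothesis, it remains only to show $\|\nabla\sigma_{k_i}(\mathbf{x}_{k_i})\|_2\to0$.

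For the second factor I would reuse the estimate $\|\nabla\sigma_k(\mathbf{x}_k)\|_2 \le \kappa_{eg}\Delta_k$ from the proof of Theorem \ref{th:m-bound}, which reduces the claim to showing that the trust-region radius vanishes along the subsequence, i.e.\ $\Delta_{k_i}\to0$. The driving mechanism is the radius update of Algorithm \ref{algo:TR}: on a successful iteration the radius is capped by $\mu\|\mathbf{g}_k\|_2$, and on every other outcome it is non-increasing, so between consecutive successful iterations $\Delta_k$ can only stay put or shrink. Combined with $\|\mathbf{g}_{k_i}\|_2\to0$, this cap is what ties the radius to the vanishing surrogate gradient; I would make it quantitative using the model-decrease estimate of Lemma \ref{lem-model-decrease} and the boundedness of $f$ (Assumption \ref{a:f-bounded-below}).

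The main obstacle is exactly this last reduction: ruling out the pathological scenario in which $\Delta_{k_i}$ stays bounded below by some $\Delta^\ast>0$ while $\|\mathbf{g}_{k_i}\|_2\to0$ (for instance through a long run of ``in-between'' iterations that leave the radius unchanged). I would dispatch it by a contradiction argument in the style of \cite{conn1997convergence}: if $\Delta_{k_i}\ge\Delta^\ast$ along a further subsequence, then for large $i$ the term $\|\mathbf{g}_{k_i}\|_2/\kappa_h$ dominates inside the minimum of Lemma \ref{lem-model-decrease}, so the guaranteed model decrease is only $O(\|\mathbf{g}_{k_i}\|_2^2)\to0$, whereas the model error of Theorem \ref{th:m-bound} stays of order $\Delta_{k_i}\ge\Delta^\ast$; hence $\rho_{k_i}<\eta_0$ eventually, the iteration is very unsuccessful, and the radius is repeatedly multiplied by $\beta_1<1$, contradicting $\Delta_{k_i}\ge\Delta^\ast$. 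This forces $\Delta_{k_i}\to0$. Finally, every inequality above holds on the same event of probability at least $1-\delta$; on that event $\|\nabla f(\mathbf{x}_{k_i})\|_2 \le \|\mathbf{g}_{k_i}\|_2 + (1+\sqrt{\beta(\tau)})\kappa_{eg}\Delta_{k_i}\to0$, so for any $\epsilon\in(0,1)$ there is $N$ with $\|\nabla f(\mathbf{x}_{k_i})\|_2<\epsilon$ for all $i>N$, yielding $\mathbb{P}\bigl(\|\nabla f(\mathbf{x}_{k_i})\|_2<\epsilon,~\forall i>N\bigr)\ge1-\delta$ as claimed.
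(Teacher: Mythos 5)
The first half of your argument is sound, and in fact more careful than the paper's own write-up: you correctly decompose $\nabla f(\mathbf{x}_{k_i}) = \mathbf{g}_{k_i} - \lambda\nabla\sigma_{k_i}(\mathbf{x}_{k_i}) + \bigl(\nabla f(\mathbf{x}_{k_i}) - \nabla\mu_{k_i}(\mathbf{x}_{k_i})\bigr)$, use $|\lambda|<1$ and the gradient-coherence bound from the proof of Theorem \ref{th:m-bound}, and reduce everything to showing $\Delta_{k_i}\to 0$ via $\|\nabla\sigma_{k_i}(\mathbf{x}_{k_i})\|_2\le\kappa_{eg}\Delta_{k_i}$. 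The gap is in how you try to establish $\Delta_{k_i}\to 0$. Your contradiction argument reasons: model error bound of order $\Delta^\ast$, guaranteed model decrease only $O(\|\mathbf{g}_{k_i}\|_2^2)$, hence $\rho_{k_i}<\eta_0$. This does not follow. Theorem \ref{th:m-bound} gives an \emph{upper} bound on $|f-m_k|$, and Lemma \ref{lem-model-decrease} gives a \emph{lower} bound on the model decrease; when the first is large relative to the second, you merely lose the ability to certify that the iteration succeeds — you cannot conclude that it fails. The actual error may be tiny (if the GP happens to model $f$ exactly, then $\rho_k=1$ and every iteration is successful, while your two bounds behave exactly as in your "pathological scenario"). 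Certifying $\rho_{k_i}<\eta_0$ would require a lower bound on the model--objective mismatch, which nothing in the paper provides. Moreover, even granting $\rho_{k_i}<\eta_0$ at the subsequence points, a single multiplication by $\beta_1$ after each $k_i$ does not contradict $\Delta_{k_i}\ge\Delta^\ast$: successful iterations between consecutive subsequence points (where $\|\mathbf{g}_k\|_2$ may be large) can re-inflate the radius by factors of $\beta_2>1$.

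The paper's proof avoids this entire difficulty by invoking an algorithmic invariant rather than analyzing the success/failure dynamics: by the successful branch of Algorithm \ref{algo:TR} (following the framework of \cite{conn1997convergence}), the trust-region radius is maintained so that $\Delta_{k_i}\le\mu\|\mathbf{g}_{k_i}\|_2$. Substituting this cap into Equation \ref{eq:eg} gives, on the single event of probability at least $1-\delta$,
\begin{equation*}
\|\nabla f(\mathbf{x}_{k_i})\|_2 \le \|\mathbf{g}_{k_i}\|_2 + \|\nabla f(\mathbf{x}_{k_i}) - \mathbf{g}_{k_i}\|_2 \le \bigl(1+\kappa_{eg}\sqrt{\beta(\tau)}\,\mu\bigr)\|\mathbf{g}_{k_i}\|_2 \to 0,
\end{equation*}
and the conclusion follows immediately. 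So the fix is to replace your contradiction step with this direct use of the cap $\Delta_k\le\mu\|\mathbf{g}_k\|_2$ (which you yourself identified as "the driving mechanism" but then did not actually use at the subsequence indices); once that is done, the rest of your argument goes through with your slightly sharper constant $1+(1+\sqrt{\beta(\tau)})\kappa_{eg}\mu$.
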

\begin{proof}
    It follows from Equation \ref{eq:eg} that
\begin{equation*}
    \mathbb{P}\left( \|\nabla f(\mathbf{x}_{k_i}) - \mathbf{g}_{k_i}\|_2
    \leq \kappa_{eg} \sqrt{\beta(\tau)} \Delta_{k_i},~\forall i \right) \geq 1-\delta.
\end{equation*}
And since $\Delta_{k_i}\leq\mu\|\mathbf{g}_{k_i}\|_2$ (according to Algo. \ref{algo:TR}), one has that
\begin{equation*}
    \mathbb{P}\left( \|\nabla f(\mathbf{x}_{k_i}) - \mathbf{g}_{k_i}\|_2
    \leq \kappa_{eg} \sqrt{\beta(\tau)} \mu\|\mathbf{g}_{k_i}\|_2,~\forall i \right) \geq 1-\delta.
\end{equation*}
And since $\|\nabla f(\mathbf{x}_{k_i})\|_2 \leq \|\mathbf{g}_{k_i}\|_2 + \|\nabla f(\mathbf{x}_{k_i}) - \mathbf{g}_{k_i}\|_2$,
one has that
\begin{equation*}
    \mathbb{P}\left( \|\nabla f(\mathbf{x}_{k_i})\|_2
    \leq (1+\kappa_{eg} \sqrt{\beta(\tau)} \mu)\|\mathbf{g}_{k_i}\|_2,~\forall i \right) \geq 1-\delta.
\end{equation*}
Combining the limit $\lim_{i\rightarrow\infty}\|\mathbf{g}_{k_i}\|_2=0$ and the above equation, one has that
$\forall \epsilon\in(0,1)~\exists N$,
    \begin{equation*}
        \mathbb{P}\left( \|\nabla f(\mathbf{x}_{k_i})\|_2 < \epsilon, \forall i>N \right) \geq 1-\delta.
    \end{equation*}
\end{proof}

Finally, the local convergence result of Theorem \ref{th:converge} immediately follows from Theorem \ref{th:g0} and Lemma \ref{le:f0}.

\section{Additional Experiments}
\subsection{Ablation Study}
\label{sec:ablation}

We conducted an ablation study to understand how Newton methods and restarting strategies influence convergence. 
We compared Newton-BO, D-scaled GP-PES, and Newton-BO with random restarts on 50-dimensional and 100-dimensional synthetic functions. 
The results are presented in Figure \ref{fig:ablation}.
Removing either component reduced BO performance, with the absence of Newton methods causing the most notable decline.
\begin{figure*}[!htb]
    \centering
    \includegraphics[width=\linewidth]{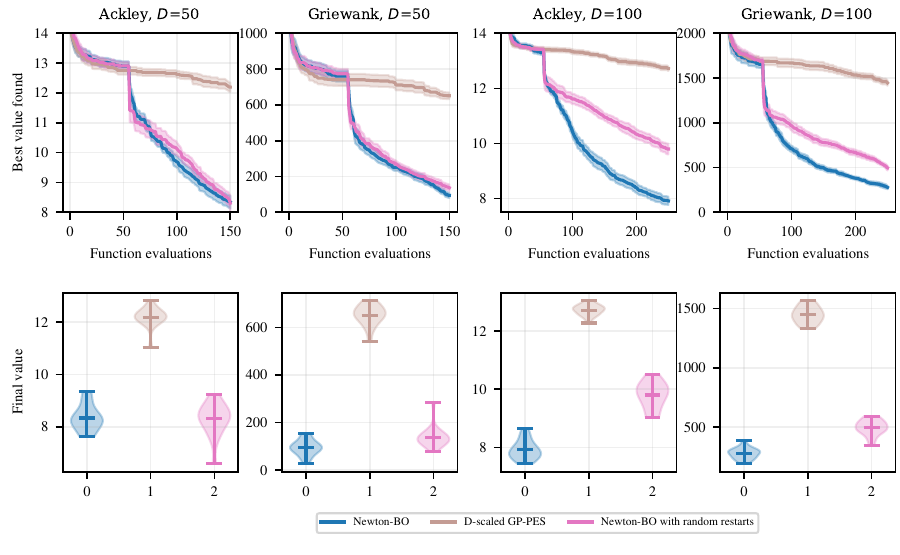}
\caption{Ablation study results comparing Newton-BO and D-scaled GP-PES.}
\label{fig:ablation}
\end{figure*}

\section{Method Implementation and Experimental Setting}
\label{sec:settings}
\paragraph{Hardware Usage.}
The synthetic and real-world experiments were conducted on a laptop equipped with a 2.9 GHz 8-Core Intel Core i7-7820HK CPU and 32 GB of RAM.

\paragraph{Baselines.}
For ALEBO, we utilize the implementations in Adaptive Experimentation Platform (Ax\footnote{https://github.com/facebook/Ax}).
Implementations for Sobol, D-scaled GP-EI, and D-scaled GP-PES are sourced from BoTorch\footnote{https://botorch.org/}.
The implementation for CMA-ES is derived from the package\footnote{https://github.com/CMA-ES/pycma}.
Implementation for TuRBO\footnote{https://github.com/uber-research/TuRBO}, GIBO\footnote{https://github.com/sarmueller/gibo}, and RDUCB\footnote{https://github.com/huawei-noah/HEBO/tree/master/RDUCB} are derived from their respective original papers.
We implement REMBO, SIR-BO, KSIR-BO, and Add-GP-UCB in Python following MATLAB codes provided in the original papers.

\paragraph{Newton-BO details}
We configure the following hyperparameters for Newton-BO in all experiments: $\tau_\text{succ}=\tau_\text{fail}=3,\ \Delta_\text{max}=0.8,$ and $\Delta_\text{init}=0.4,$
which resemble the settings used in TuRBO.
Besides, $\Delta_\text{min}$ is employed to balance local search and global search. 
To accelerate global convergence, we vary $\Delta_\text{min}$ by problem type:
(i) for synthetic functions and the MOPTA08 problem, $\Delta_\text{min}=0.05$,
(ii) for the Lasso-synt\_high, Rover, and Hopper problems, $\Delta_\text{min}=0.1$,
(iii) for Lasso-DNA, Robot pushing and Walker2d problems, $\Delta_\text{min}=0.2$.

Following TuRBO, if a new point improves the current best solution we increment the success counter and reset the failure counter to zero,
otherwise we set the success counter to zero and increment the failure counter.

For each trust region, we initialize $\Delta\leftarrow\Delta_\text{init}$ and terminate the trust region when $\Delta < \Delta_\text{min}$ or $\|\mathbf{g}\|\leq 10^{-5}$.
A new restart is then selected by maximizing the PES function.

\end{document}